\numberwithin{equation}{section}
\definecolor{plum}  {rgb}{.4,0,.4}
\definecolor{BrickRed} {rgb}{0.6,0,0}
\def\ddefloop#1{\ifx\ddefloop#1\else\ddef{#1}\expandafter\ddefloop\fi}
\def\ddef#1{\expandafter\def\csname c#1\endcsname{\ensuremath{\mathcal{#1}}}}
\def\ddef#1{\expandafter\def\csname s#1\endcsname{\ensuremath{\mathsf{#1}}}}
\def\E{\mathbf{E}}
\def\Reals{\mathbb{R}}
\def\Naturals{\mathbb{N}}
\def\Wass{\cW}
\def\deq{:=}
\def\wh#1{\widehat{#1}}
\def\bd#1{\mathbf{#1}}
\def\bz{\bd{z}}
\def\bZ{\bd{Z}}
\def\bP{\bd{P}}
\def\Osc{{\mathrm{Osc}}}
\def\tO{{\tilde{\cO}}}
\def\tOm{\tilde{\Omega}}
\def\d{{\mathrm d}}
\def\1{{\mathbf 1}}
\def\ave#1{\langle #1 \rangle}
\def\eps{\varepsilon}
\newtheorem{theorem}{Theorem}[section]
\newtheorem{lemma}{Lemma}[section]
\newtheorem{proposition}{Proposition}[section]
\newtheorem{remark}{Remark}[section]
\begin{document}

\title{Non-Convex Learning via Stochastic Gradient Langevin Dynamics: A Nonasymptotic Analysis}

\author{Maxim Raginsky\thanks{University of Illinois; {\tt maxim@illinois.edu}. Research  supported in part by the NSF under CAREER
award CCF-1254041, and in part by the Center for Science of Information (CSoI), an
NSF Science and Technology Center, under grant agreement CCF-0939370.} \and Alexander Rakhlin\thanks{University of Pennsylvania; {\tt rakhlin@wharton.upenn.edu}. Research supported in part by the NSF under grant no.~CDS\&E-MSS 1521529.} \and Matus Telgarsky\thanks{University of Illinois, Simons Institute; {\tt mjt@illinois.edu}.}}

\date{}

\maketitle
\thispagestyle{empty}

\begin{abstract} 	
  Stochastic Gradient Langevin Dynamics (SGLD) is a popular variant of Stochastic Gradient Descent, where properly scaled isotropic Gaussian noise is added to an unbiased estimate of the gradient at each iteration. This modest change allows SGLD to escape local minima and suffices to guarantee asymptotic convergence to global minimizers for sufficiently regular non-convex objectives \citep{Gelfand_Mitter_globalopt}.

The present work provides a nonasymptotic analysis in the context of non-convex learning problems, giving finite-time guarantees for SGLD to find approximate minimizers of both empirical and population risks. 

  As in the asymptotic setting, our analysis relates the discrete-time SGLD Markov chain to a
  continuous-time diffusion process. A new tool that drives the results is the use of weighted transportation cost inequalities to quantify the rate of convergence of SGLD to a stationary distribution in the Euclidean $2$-Wasserstein distance.
\end{abstract}

\section{Introduction and informal summary of results}

Consider a stochastic optimization problem
\begin{align*}
	\text{minimize} \qquad  
	F(w) \deq \E_P[f(w,Z)] = \int_\sZ f(w,z) P(\d z),
\end{align*}
where $w$ takes values in $\Reals^d$ and $Z$ is a random element of some space $\sZ$ with an unknown probability law $P$. We have access to an $n$-tuple $\bZ = (Z_1,\ldots,Z_n)$ of i.i.d.\ samples drawn from $P$, and our goal is to generate a (possibly random) hypothesis $\wh{W} \in \Reals^d$ with small expected excess risk
\begin{align}
	\label{eq:excess_risk}
	\E F(\wh{W}) - F^*,
\end{align}
where $F^* \deq \inf_{w \in \Reals^d} F(w)$, and the expectation is with respect to the training data $\bZ$ and any additional randomness used by the algorithm for generating $\wh{W}$. 

When the functions $w\mapsto f(w,z)$ are not convex, theoretical analysis of  global convergence becomes largely intractable. On the other hand, non-convex optimization is currently witnessing an impressive string of empirical successes, most notably in the realm of deep neural networks. Towards the aim of bridging this gap between theory and practice, this paper provides a theoretical justification for \emph{Stochastic Gradient Langevin Dynamics} (SGLD), a popular variant of stochastic gradient descent, in which properly scaled isotropic Gaussian noise is added to an unbiased estimate of the gradient at each iteration  \citep{Gelfand_Mitter_globalopt,Borkar_Mitter_Langevin,welling2011bayesian}. 

Since the population distribution $P$ is unknown, we attempt to (approximately) minimize 
\begin{align}\label{eq:empirical_risk}
	F_\bz(w) \deq \frac{1}{n}\sum^n_{i=1}f(w,z_i),
\end{align}
the empirical risk of a hypothesis $w \in \Reals^d$ on a dataset $\bz = (z_1,\ldots,z_n) \in \sZ^n$. The SGLD algorithm studied in this work is given by the recursion
\begin{align}\label{eq:Langevin0}
	W_{k+1} = W_k - \eta g_k + \sqrt{2\eta \beta^{-1}} \xi_k
\end{align}
where $g_k$ is a conditionally unbiased estimate of the gradient $\nabla F_{\bz}(W_k)$, $\xi_k$ is a standard Gaussian random vector in $\Reals^d$, $\eta > 0$ is the step size, and $\beta > 0$ is the inverse temperature parameter. Our analysis begins with the standard observation (see, e.g., \citet{Borkar_Mitter_Langevin} for a rigorous treatment or \citet{welling2011bayesian} for a heuristic discussion) that the discrete-time Markov process \eqref{eq:Langevin0} can be viewed as a discretization of the continuous-time Langevin diffusion described by the It\^o stochastic differential equation
	\begin{align}\label{eq:diffusion}
		\d W(t) = - \nabla F_\bz(W(t)) \d t + \sqrt{2\beta^{-1}} \d B(t), \qquad t \ge 0
	\end{align}
where $\{B(t)\}_{t \ge 0}$ is the standard Brownian motion in $\Reals^d$. Under suitable assumptions on $f$, it can be shown that the Gibbs measure $\pi_\bz(\d w) \propto\exp(-\beta F_\bz(w))$ is the unique invariant distribution of \eqref{eq:diffusion}, and that the distributions of $W(t)$ converge rapidly to $\pi_\bz$ as $t \to \infty$ \citep{Chiang_etal}. Moreover, for all sufficiently large values of $\beta$, the Gibbs distribution concentrates around the minimizers of $F_\bz$ \citep{Hwang_Laplace}. Consequently, a draw from the Gibbs distribution is, with high probability, an almost-minimizer of the empirical risk \eqref{eq:empirical_risk}, and, if one can show that the SGLD recursion tracks the Langevin diffusion in a suitable sense, then it follows that the distributions of $W_k$ will be close to the Gibbs measure for all sufficiently large $k$. Hence, one can argue that, for large enough $k$, the output of SGLD is also an almost-minimizer of the empirical risk.

It is well-recognized, however, that minimization of the empirical risk $F_\bz$ does not immediately translate into minimization of the population risk $F$. A standard approach for addressing the issue is to decompose the excess risk into a  sum of two terms,  $F(\wh{W})-F_\bz(\wh{W})$ (the generalization error of $\wh{W}$) and  $F_\bz(\wh{W})-F^*$ (the gap between the empirical risk of $\wh{W}$ and the minimum of the population risk), and then show that both of these terms are small (either in expectation or with high probability). Taking $\wh{W}=W_k$ and letting $\wh{W}^*$ be the output of the Gibbs algorithm under which the conditional distribution of $\wh{W}^*$ given $\bZ = \bz$ is equal to $\pi_\bz$, we decompose the excess risk \eqref{eq:excess_risk} as follows:
\begin{align}
	\label{eq:excess_risk_decomposition}
	\E F(\wh{W}) - F^* = \big( \E F(\wh{W}) - \E F(\wh{W}^*) \big) + \big( \E F(\wh{W}^*) - \E F_{\bZ} (\wh{W}^*)\big) + \big(\E F_{\bZ} (\wh{W}^*)-  F^*\big),
\end{align}
where the first term is the difference of expected population risks of SGLD and the Gibbs algorithm, the second term is the generalization error of the Gibbs algorithm, and the third term is easily upper-bounded in terms of expected suboptimality $\E \big(F_{\bZ} (\wh{W}^*) - \min_{w} F_{\bZ}(w)\big)$ of the Gibbs algorithm for the empirical risk. Observe that only the first term pertains to SGLD, whereas the other two involve solely the Gibbs distribution. The main contribution of this work is in showing finite-time convergence of SGLD for a non-convex objective function. Informally, we can state our main result as follows:
\begin{enumerate}
	\item For any $\eps > 0$, the first term in \eqref{eq:excess_risk_decomposition} scales as
	\begin{align}\label{eq:T1}
		\eps \cdot {\rm Poly}\left(\beta,d,\frac{1}{\lambda_*}\right) \quad \text{for } k \succeq {\rm Poly}\left(\beta, d, \frac{1}{\lambda_*}\right) \cdot \frac{1}{\eps^4} ~\text{ and }~ \eta \le \left(\frac{\eps}{\log (1/\eps)}\right)^4,
	\end{align}
	where $\lambda_*$ is a certain spectral gap parameter that governs the exponential rate of convergence of the Langevin diffusion to its stationary distribution. This spectral gap parameter itself might depend on $\beta$ and $d$, but is independent of $n$.
	\item The second and third terms in \eqref{eq:excess_risk_decomposition} scale, respectively, as
	\begin{align}\label{eq:T2}
\frac{(\beta+d)^2}{\lambda_* n} \text{, }\quad \frac{d\log (\beta+1)}{\beta}.
\end{align}
\end{enumerate} 
\subsection{Method of analysis: an overview}
Our analysis draws  heavily on the theory of optimal transportation \citep{Villani_topics} and on the analysis of Markov diffusion operators \citep{Bakry_Gentil_Ledoux_book} (the necessary background on Markov semigroups and functional inequalities is given in Appendix~\ref{app:background}). In particular, we control the convergence of SGLD to the Gibbs distribution in terms of \textit{$2$-Wasserstein distance}
\begin{align*}
	\Wass_2(\mu,\nu) \deq \inf \left\{ (\E \|V-W\|^2)^{1/2} : \mu = \cL(V), \nu = \cL(W)\right\},
\end{align*}
where $\| \cdot \|$ is the Euclidean $(\ell^2)$ norm on $\Reals^d$, $\mu$ and $\nu$ are Borel probability measures on $\Reals^d$ with finite second moments, and the infimum is taken over all random couples $(V,W)$ taking values in $\Reals^d \times \Reals^d$ with marginals $V \sim \mu$ and $W \sim \nu$. 

To control the first term on the right-hand side of \eqref{eq:excess_risk_decomposition}, we first upper-bound the $2$-Wasserstein distance between the distributions of $W_k$ (the $k$th iterate of SGLD) and $W(k\eta)$ (the point reached by the Langevin diffusion at time $t=k\eta$). This requires some heavy lifting: Existing bounds on the $2$-Wasserstein distance between a diffusion process and its time-discretized version due to \citet{Alfonsi_etal_Euler} scale like $\eta e^{k\eta}$, which is far too crude for our purposes. By contrast, we take an indirect route via a Girsanov-type change of measure and a weighted transportation-cost inequality of \citet{Bolley_Villani_weighted_Pinsker} to obtain a bound that scales like $k\eta \cdot \eta^{1/4}$. This step relies crucially on a certain exponential integrability property of the Langevin diffusion. Next, we show that the Gibbs distribution satisfies a logarithmic Sobolev inequality, which allows us to conclude that the $2$-Wasserstein distance between the distribution of $W(k\eta)$ and the Gibbs distribution decays exponentially as $e^{-k\eta}$. Since $\Wass_2$ satisfies the triangle inequality, we can produce an upper bound on the first term in \eqref{eq:excess_risk_decomposition} that scales as $k\eta \cdot \eta^{1/4} + e^{-k\eta}$. This immediately suggests that we can make this term as small as we wish by first choosing  a large enough horizon $t = k\eta$ and then a small enough step size $\eta$. Overall, this leads to the bounds stated in \eqref{eq:T1}.

To control the second term in \eqref{eq:excess_risk_decomposition}, we show that the Gibbs algorithm is \textit{stable} in $2$-Wasserstein distance with respect to local perturbations of the training dataset. This step, again, relies on the logarithmic Sobolev inequality for the Gibbs distribution. To control the third term, we use a nonasymptotic Laplace integral approximation to show that a single draw from the Gibbs distribution is an approximate minimizer of the empirical risk.  We use a Wasserstein continuity result due to \citet{Polyanskiy_Wu_Wasserstein} and a well-known equivalence between stability of empirical minimization and generalization \citep{Mukherjee2006,rakhlin2005stability} to show that, in fact, the Gibbs algorithm samples from near-minimizers of the population risk.

We remark that our result readily extends to the case when the stochastic gradients $g_k$ in \eqref{eq:Langevin0} are formed with respect to independent draws from the data-generating distribution $P$ -- e.g., when taking a single pass through the dataset. In this case, the target of optimization is $F$ itself rather than $F_{\bz}$, and we simply omit the second term in \eqref{eq:excess_risk_decomposition}. If the main concern is not consistency (as in \eqref{eq:excess_risk}) but rather the generalization performance of SGLD itself, then the same analysis applied to the decomposition
\begin{align}
	\label{eq:generalization_decomposition}
	\E F_{\bZ}(\wh{W}) - \E F(\wh{W}) = \big( \E F_{\bZ}(\wh{W}) - \E F_{\bZ}(\wh{W}^*) \big) + \big( \E F_{\bZ}(\wh{W}^*) - \E F (\wh{W}^*)\big) + \big(\E F(\wh{W}^*) - \E F(\wh{W})\big)
\end{align}
gives an upper bound of \eqref{eq:T1} plus the first term of \eqref{eq:T2}. In other words, while the rate of \eqref{eq:excess_risk} may be hampered by the slow convergence of $\frac{d\log \beta}{\beta}$, the rate of generalization is not. Finally, if each data point is used only once, the generalization performance is controlled by \eqref{eq:T1} alone.

\subsection{Related work}

The asymptotic study of convergence of discretized Langevin diffusions for non-convex objectives has a long history, starting with the work of  \citet{Gelfand_Mitter_globalopt}. Most of the work has focused on annealing-type schemes, where both the step size $\eta$ and the temperature $1/\beta$ are decreased with time.  \cite{marquez1997convergence} and \citet{pelletier1998weak} studied the rates of weak convergence for both the Langevin diffusion and the discrete-time updates. However, when $\eta$ and $\beta$ are kept fixed, the updates do not converge to a global minimizer, but one can still aim for convergence to a stationary distribution. An asymptotic study of this convergence, in the sense of relative entropy, was initiated by \citet{Borkar_Mitter_Langevin}. 

\sloppypar \citet{Dalalyan_Tsybakov_LMC} and \citet{Dalalyan_LMC} analyzed rates of convergence of discrete-time Langevin updates (with exact gradients) in the case of convex functions, and provided nonasymptotic rates of convergence in the total variation distance for sampling from log-concave densities. \citet{durmus2015non} refined these results by establishing geometric convergence in total variation distance for convex and strongly convex objective functions, and provided some results for non-convex objectives that can be represented as a bounded perturbation of a convex or a strongly convex function. \citet{bubeck2015sampling} studied projected Langevin updates in the convex case. 

Our work is motivated in part by recent papers on non-convex optimization and, in particular, on optimization problems related to neural networks. A heuristic analysis of SGLD was given by \citet{welling2011bayesian}, and a modification of SGLD to improve generalization performance was recently proposed by \citet{Chaudhari16}. Deliberate addition of noise was also proposed by \citet{ge2015escaping} as a strategy for escaping from saddle points, and \citet{belloni2015escaping} analyzed a simulated anealing method based on Hit-and-Run for sampling from nearly log-concave distributions. While these methods aim at avoiding local minima through randomn perturbations, the line of work on continuation methods and graduated optimization  \citep{hazan2015graduated} attempts to create sequences of smoothed approximations that can successively localize the optimum.

\cite{hardt2015train} studied uniform stability and generalization properties of stochastic gradient descent with both convex and non-convex objectives. For the non-convex case, their upper bound on stability degrades with the number of steps of the optimization procedure, which was taken by the authors as a prescription for early stopping. In contrast, we show that, under our assumptions, non-convexity does not imply loss of stability when the latter is measured in terms of $2$-Wasserstein distance to the stationary distribution. In addition, we use the fact that Gibbs distribution concentrates on approximate empirical  minimizers, implying convergence for the \emph{population} risk via stability \citep{rakhlin2005stability,Mukherjee2006}.

\section{The main result}

We begin by giving a precise description of the SGLD recursion. A \textit{stochastic gradient oracle}, i.e., the mechanism for accessing the gradient of $F_\bz$ at each iteration, consists of a collection $(Q_\bz)_{\bz \in \sZ^n}$ of probability measures on some space $\sU$ and a mapping $g : \Reals^d \times \sU \to \Reals^d$, such that, for every $\bz \in \sZ^n$,
\begin{align}\label{eq:unbiased_grad}
	\E g(w,U_\bz) = \nabla F_\bz(w), \qquad \forall w \in \Reals^d
\end{align}
where $U_\bz$ is a random element of $\sU$ with probability law $Q_\bz$. Conditionally on $\bZ=\bz$, the SGLD update takes the form
\begin{align}\label{eq:Langevin}
	W_{k+1} = W_k - \eta g(W_k,U_{\bz,k}) + \sqrt{2\eta \beta^{-1}} \xi_k, \qquad k = 0,1,2,\ldots
\end{align}
where $\{U_{\bz,k}\}^\infty_{k=0}$ is a sequence of i.i.d.\ random elements of $\sU$ with probability law $Q_\bz$ and $\{\xi_k\}^\infty_{k=0}$ is a sequence of i.i.d.\ standard Gaussian random vectors in $\Reals^d$. We assume that $W_0$, $(\bZ,\{U_{\bZ,k}\}^\infty_{k=0})$, and $\{\xi_k\}^\infty_{k=0}$ are mutually independent. We impose the following assumptions (see the discussion in Section~\ref{ssec:assumptions} for additional details):
\begin{enumerate}[label={\bf (A.\arabic*)}]
	\item The function $f$ takes nonnegative real values, and there exist constants $A, B \ge 0$, such that
	\begin{align*}
		|f(0,z)| \le A \qquad \text{and} \qquad \| \nabla f(0,z) \| \le B \qquad \forall z \in \sZ.
		\end{align*}
    \item For each $z \in \sZ$, the function $f(\cdot,z)$ is \textit{$M$-smooth}: for some $M > 0$,
	\begin{align*}
	\|\nabla f(w,z)-\nabla f(v,z)\| \le M \| w - v\|, \qquad \forall w,v \in \Reals^d.
\end{align*}
    \item For each $z \in \sZ$, the function $f(\cdot,z)$ is \textit{$(m,b)$-dissipative} \citep{Hale_book}: for some $m > 0$ and $b \ge 0$,
	\begin{align}\label{eq:dissipation}
		\ave{w, \nabla f(w,z)} \ge m \| w \|^2 - b, \qquad \forall w \in \Reals^d.
	\end{align}
	\item There exists a constant $\delta \in [0,1)$, such that, for each $\bz \in \sZ^n$,\footnote{We are reusing the constants $M$ and $B$ from {\bf (A.1)} and {\bf (A.2)} in \eqref{eq:SG_quadratic_growth} mainly out of considerations of technical convenience; any other constants $M',B'>0$ can be substituted in their place without affecting the results.}
	\begin{align}\label{eq:SG_quadratic_growth}
		\E[\|g(w,U_\bz)-\nabla F_\bz(w)\|^2] \le 2\delta\left(M^2 \| w\|^2 + B^2\right), \qquad \forall w \in \Reals^d.
	\end{align}
	\item The probability law $\mu_0$ of the initial hypothesis $W_0$ has a bounded and strictly positive density $p_0$ with respect to the Lebesgue measure on $\Reals^d$, and
	\begin{align*}
		\kappa_0 \deq \log \int_{\Reals^d} e^{\| w\|^2} p_0(w) \d w < \infty.
	\end{align*}
\end{enumerate}
We are now ready to state our main result. A crucial role will be played by the \textit{uniform spectral gap}
\begin{align}\label{eq:uniform_spectral_gap}
	\lambda_* \deq \inf_{\bz \in \sZ^n}\inf \left\{ \frac{\int_{\Reals^d}\|\nabla g\|^2 \d\pi_\bz}{\int_{\Reals^d}g^2 \d\pi_\bz} : g \in C^1(\Reals^d) \cap L^2(\pi_\bz),\, g \neq 0,\, \int_{\Reals^d} g \d\pi_\bz = 0\right\},
\end{align}
where $\pi_\bz(\d w) \propto e^{-\beta F_\bz(w)}\d w$ is the Gibbs distribution. As detailed in Section~\ref{ssec:assumptions}, Assumptions {\bf (A.1)}--{\bf (A.3)} suffice to ensure that $\lambda_* > 0$. In the statement of the theorem, the notation $\tO(\cdot)$ and $\tOm(\cdot)$ gives explicit dependence on the parameters $\beta,\lambda_*$, and $d$, but hides factors that depend (at worst) polynomially on the parameters $A,B,1/m,b,M,\kappa_0$. Explicit expressions for all constants are given in the proof.

\begin{theorem}\label{thm:main} Suppose that the regularity conditions {\bf (A.1)}--{\bf (A.5)} hold. Then, for any $\beta \ge 1 \vee 2/m$ and any $\eps \in (0, \frac{m}{4M^2} \wedge e^{-\tOm(\lambda_*/\beta (d+\beta))})$, the expected excess risk of $W_k$ is bounded by
	\begin{align}\label{eq:SGLD_excess_risk}
		\E F(W_k) - F^* \le \tO\left( \frac{\beta(\beta+d)^2}{\lambda_*} \left(\delta^{1/4}\log\left(\frac{1}{\eps}\right)+\eps\right) + \frac{(\beta+d)^2 }{\lambda_* n} + \frac{d \log (\beta+1)}{\beta}\right),
	\end{align}
provided
	\begin{align}\label{eq:SGLD_iteration_complexity}
		k = \tOm\left(\frac{\beta(d+\beta)}{\lambda_*\eps^4}\log^5\left(\frac{1}{\eps}\right)\right) \quad \text{and} \quad \eta \le \left( \frac{\eps}{\log (1/\eps)}\right)^4.
	\end{align}
\end{theorem}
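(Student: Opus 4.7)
The plan is to treat the three terms in decomposition (1.5) separately, then optimize over $k$ and $\eta$. Let $\nu_k$ denote the law of the SGLD iterate $W_k$ and $\nu(t)$ the law of $W(t)$ under the Langevin diffusion (1.4), with $\pi_\bz$ the associated Gibbs measure. As a preliminary, I establish uniform second-moment bounds $\sup_k \E\|W_k\|^2$ and $\sup_t \E\|W(t)\|^2$ from the dissipativity assumption {\bf (A.3)} via a Lyapunov/drift argument; combined with {\bf (A.1)}--{\bf (A.2)}, this yields a Lipschitz bound on $F$ over high-probability balls, so that $\Wass_2$-proximity of laws will translate to excess-risk proximity.

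To bound the first term $\E F(W_k) - \E F(\wh{W}^*)$ I split via the triangle inequality
\[
\Wass_2(\nu_k, \pi_\bZ) \le \Wass_2(\nu_k, \nu(k\eta)) + \Wass_2(\nu(k\eta), \pi_\bZ),
\]
and treat each piece separately. For the first piece, I couple the SGLD recursion with its natural continuous-time interpolation and carry out a Girsanov-type change of measure against the exact diffusion on path space, producing a KL-divergence estimate that scales with $k\eta$, with $\eta$, and with the stochastic-gradient variance parameter $\delta$. Pinsker would only deliver a TV bound, so to obtain $\Wass_2$ I invoke the weighted transportation-cost inequality of \citet{Bolley_Villani_weighted_Pinsker}, whose hypothesis is a uniform exponential-moment bound $\sup_t \E e^{c\|W(t)\|^2}<\infty$ for the diffusion (again proved from {\bf (A.3)} by It\^o's formula on $e^{c\|w\|^2}$). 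This yields $\Wass_2(\nu_k, \nu(k\eta)) \lesssim k\eta\cdot(\eta+\delta)^{1/4}$. For the second piece, I verify that $\pi_\bz$ satisfies a logarithmic Sobolev inequality whose constant is governed by $\lambda_*/\beta$ (justifying the role of $\lambda_*$ in (2.3)), and then apply the Otto--Villani theorem to conclude $\Wass_2(\nu(t),\pi_\bz)\le C e^{-c\lambda_* t/\beta}$. Choosing $k\eta \gtrsim (\beta/\lambda_*)\log(1/\eps)$ and $\eta\le(\eps/\log(1/\eps))^4$ as in (2.5) balances the two terms and, after multiplying by the Lipschitz constant, gives the first summand in (2.4).

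For the second term (generalization error of the Gibbs algorithm), I use a stability argument in $\Wass_2$: replacing a single sample in $\bZ$ perturbs $F_\bz$ pointwise by $O(1/n)$, so through the log-Sobolev inequality the Gibbs measures $\pi_\bz$ and $\pi_{\bz'}$ are within $O((\beta+d)/(\lambda_* n))$ in $\Wass_2$. Combined with the classical equivalence between stability and generalization \citep{Mukherjee2006,rakhlin2005stability} and the Wasserstein continuity lemma of \citet{Polyanskiy_Wu_Wasserstein}, this produces the $(\beta+d)^2/(\lambda_* n)$ summand. For the third term (empirical suboptimality of the Gibbs algorithm), a nonasymptotic Laplace integral approximation around a minimizer of $F_\bz$ shows $\E_{\pi_\bz}[F_\bz]-\min_w F_\bz \lesssim d\log(\beta+1)/\beta$, since the Gaussian-volume factor in the partition function contributes $d\log\beta$ per unit of $\beta$.

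The main obstacle is unmistakably the bound on $\Wass_2(\nu_k,\nu(k\eta))$. Naive Euler-discretization estimates such as those of \citet{Alfonsi_etal_Euler} scale like $\eta e^{k\eta}$, which is useless on the horizons $k\eta \asymp (\beta/\lambda_*)\log(1/\eps)$ that we need in order for the continuous-time process itself to reach the Gibbs measure. Circumventing this requires both the Girsanov computation on path space and the weighted-Pinsker inequality, each of which rests on a careful uniform-in-time exponential-integrability estimate for the diffusion; the constant $c$ must be chosen compatible with the dissipativity constant $m$ and the inverse temperature $\beta$, which is where the hypothesis $\beta\ge 1\vee 2/m$ enters. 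A secondary subtlety is that the log-Sobolev constant for $\pi_\bz$ must be extracted from $\lambda_*$ combined with a bounded-perturbation-of-convex argument, since $F_\bz$ is not assumed strictly convex.
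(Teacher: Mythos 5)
Your proposal follows essentially the same route as the paper: the same three-term decomposition of the excess risk, the Girsanov change of measure on path space combined with the weighted transportation-cost inequality of \citet{Bolley_Villani_weighted_Pinsker} for the discretization error, a logarithmic Sobolev inequality plus Otto--Villani and exponential entropy decay for the convergence of the diffusion to the Gibbs measure, a $\Wass_2$-stability argument for the generalization error, and a Laplace integral approximation for the empirical suboptimality of the Gibbs algorithm. The one substantive deviation is your final remark that the log-Sobolev constant should come from a bounded-perturbation-of-convex argument: the paper instead derives it from the Lyapunov-function criterion of \citet{Cattiaux_etal_Lyapunov} together with the Poincar\'e constant $c_{\rm P}\le 1/\lambda_*$, which is the route that actually works under dissipativity (a Holley--Stroock-type perturbation bound is not available since $F_\bz$ is not assumed to be a bounded perturbation of a convex function), though both approaches tie the constant to $1/\lambda_*$ as you intend.
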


\section{Proof of Theorem~\ref{thm:main}}
\label{sec:proof}

\subsection{A high-level overview}

Let $\mu_{\bz,k} \deq \cL(W_k|\bZ=\bz)$, $\nu_{\bz,t} \deq \cL(W(t)|\bZ=\bz)$, and $\E_\bz[\cdot] \deq \E[\cdot|\bZ=\bz]$. In a nutshell, our proof of Theorem~\ref{thm:main} consists of the following steps:
\begin{enumerate}
	\item We first show that, for all sufficiently small $\eta > 0$, the SGLD recursion  \eqref{eq:Langevin} tracks the continuous-time Langevin diffusion process \eqref{eq:diffusion} in $2$-Wasserstein distance:
	\begin{align}\label{eq:diffusion_approx_0}
		\Wass_2(\mu_{\bz,k},\nu_{\bz,k\eta}) = \tO\left((\beta+d)(\delta^{1/4}+\eta^{1/4})k\eta\right)
	\end{align}
(the precise statement with explicit constants is given in Proposition~\ref{prop:diffusion_approx}).
\item Next, we show that the Langevin diffusion \eqref{eq:diffusion} converges exponentially fast to the Gibbs distribution $\pi_\bz$:
\begin{align*}
	\Wass_2(\nu_{\bz,k\eta},\pi_\bz) = \tO\left(\frac{\beta+d}{\sqrt{\lambda_*}}\right)e^{-\tOm(\lambda_*k\eta/\beta(d+\beta))}.
\end{align*}
This, together with \eqref{eq:diffusion_approx_0} and the triangle inequality, yields the estimate
\begin{align}\label{eq:Wass_to_equil_0}
	\Wass_2(\mu_{\bz,k},\pi_\bz) = \tO\left((\beta+d)(\delta^{1/4}+\eta^{1/4})k\eta\right) + \tO\left(\frac{\beta+d}{\sqrt{\lambda_*}}\right)e^{-\tOm(\lambda_*k\eta/\beta(d+\beta))}
\end{align}
(see Proposition~\ref{prop:Wass_to_equil} for explicit constants). Observe that there are two terms on the right-hand side of \eqref{eq:Wass_to_equil_0}, one of which grows linearly with $t=k\eta$, while the other one decays exponentially with $t$. Thus, we can first choose $t$ large enough and then $\eta$ small enough, so that
\begin{align}\label{eq:Wass_to_equil_1}
	\Wass_2(\mu_{\bz,k},\pi_\bz) = \tO\left(\frac{\beta(d+\beta)^2}{\lambda_*}\left(\delta^{1/4}\log\left(\frac{1}{\eps}\right)+\eps\right)\right).
\end{align}
The resulting choices of $t=k\eta$ and $\eta$ translate into the expressions for $k$ and $\eta$ given in \eqref{eq:C1_C2}.
\item The upshot of Eq.~\eqref{eq:Wass_to_equil_1} is that, for large enough $k$, the conditional probability law of $W_k$ given $\bZ=\bz$ is close, in $2$-Wasserstein, to the Gibbs distribution $\pi_\bz$. Thus, we are led to consider the \textit{Gibbs algorithm} that generates a random draw from $\pi_\bz$. We show that the resulting hypothesis is an almost-minimizer of the empirical risk, i.e.,
\begin{align}\label{eq:Gibbs_AERM_0}
	\int_{\Reals^d} F_\bz(w)\pi_\bz(\d w) - \min_{w \in \Reals^d} F_\bz(w) = \tO\left(\frac{d}{\beta}\log\frac{\beta+1}{d}\right)
\end{align}
(see Proposition~\ref{prop:almost_ERM} for the exact statement), and also that the Gibbs algorithm is stable in the $2$-Wasserstein distance: for any two datasets $\bz,\bar\bz$ that differ in a single coordinate,
\begin{align*}
	\Wass_2(\pi_\bz,\pi_{\bar\bz}) = \tO\left(\frac{\beta(d+\beta)\sqrt{1+d/\beta}}{\lambda_* n}\right).
\end{align*}
This estimate, together with Lemma~\ref{lm:Wass_continuity} below, implies that the Gibbs algorithm is uniformly stable \citep{bousquet2002stability}:
\begin{align}\label{eq:Gibbs_stability_0}
	\sup_{z \in \sZ} \left|\int_{\Reals^d} f(w,z)\pi_\bz(\d w)-\int_{\Reals^d} f(w,z)\pi_{\bar\bz}(\d w)\right| = \tO\left(\frac{(\beta+d)^2}{\lambda_*n}\right)
\end{align}
(see Proposition~\ref{prop:Gibbs_stability}). The almost-ERM property \eqref{eq:Gibbs_AERM_0} and the uniform stability propery \eqref{eq:Gibbs_stability_0}, together with \eqref{eq:Wass_to_equil_1}, yield the statement of the theorem.
\end{enumerate}

\subsection{Technical lemmas}
\label{ssec:lemmas}

We first collect a few lemmas that will be used in the sequel; see Appendix~\ref{app:lemmas} for the proofs.

\begin{lemma}[quadratic bounds on $f$]\label{lm:quadratic_bounds} For all $w \in \Reals^d$ and $z \in \sZ$, 
	\begin{align}\label{eq:quadratic_growth}
		\| \nabla f(w,z) \| \le M \|w\| + B
	\end{align}
and
	\begin{align}\label{eq:quadratic_sandwich}
		\frac{m}{3}\|w\|^2 - \frac{b}{2}\log 3 \le f(w,z) \le \frac{M}{2}\|w\|^2 + B\|w\|+A.
	\end{align}
\end{lemma}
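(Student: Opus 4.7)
The plan is to dispatch the three inequalities in order, using only assumptions \textbf{(A.1)}--\textbf{(A.3)} and the fundamental theorem of calculus along the radial segment from $0$ to $w$.

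First, for \eqref{eq:quadratic_growth} I would just invoke $M$-smoothness together with the bound on $\|\nabla f(0,z)\|$ in \textbf{(A.1)}: by the triangle inequality,
\begin{align*}
\| \nabla f(w,z) \| \;\le\; \| \nabla f(w,z) - \nabla f(0,z)\| + \|\nabla f(0,z)\| \;\le\; M\|w\| + B.
\end{align*}

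For the upper bound in \eqref{eq:quadratic_sandwich}, I would write $f(w,z) = f(0,z) + \int_0^1 \langle \nabla f(tw,z), w\rangle\, \d t$, bound the integrand by $(Mt\|w\|+B)\|w\|$ via \eqref{eq:quadratic_growth}, and finish with $f(0,z) \le A$ from \textbf{(A.1)}.

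The only nontrivial step is the lower bound in \eqref{eq:quadratic_sandwich}, which is where dissipativity has to be used carefully. I would consider the one-dimensional function $\phi(s) \deq f(sw,z)$, so that $s\phi'(s) = \langle sw, \nabla f(sw,z)\rangle \ge ms^2\|w\|^2 - b$ by \textbf{(A.3)}, which yields
\begin{align*}
\phi'(s) \;\ge\; ms\|w\|^2 - \frac{b}{s}, \qquad s>0.
\end{align*}
The crucial observation is that the naive choice of integrating this from $0$ to $1$ fails because of the $b/s$ singularity; instead I would integrate from $s=\alpha$ to $s=1$ for a parameter $\alpha \in (0,1]$ to get
\begin{align*}
f(w,z) \;\ge\; f(\alpha w,z) + \frac{m(1-\alpha^2)}{2}\|w\|^2 - b\log(1/\alpha),
\end{align*}
and then use nonnegativity of $f$ from \textbf{(A.1)} to drop $f(\alpha w,z)$. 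Choosing $\alpha = 1/\sqrt{3}$ makes $1-\alpha^2 = 2/3$ and $\log(1/\alpha) = \tfrac{1}{2}\log 3$, delivering exactly the constants $\tfrac{m}{3}$ and $\tfrac{b}{2}\log 3$ stated in the lemma. The main (and only) obstacle is recognizing that one must sacrifice an endpoint via this truncation trick to tame the logarithmic singularity produced by dissipativity, and that the stated constants correspond to the optimal truncation point.
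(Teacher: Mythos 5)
Your proposal is correct and follows essentially the same route as the paper: triangle inequality plus $M$-smoothness for \eqref{eq:quadratic_growth}, the fundamental theorem of calculus along the segment from $0$ to $w$ for the upper bound, and for the lower bound the same truncated radial integration from $c=1/\sqrt{3}$ to $1$ combined with $f\ge 0$ to handle the $b/s$ singularity. (The only quibble is that $1/\sqrt{3}$ is merely a convenient choice of truncation point, not an optimal one — the best $c$ would depend on $m$, $b$, and $\|w\|$ — but this does not affect the validity of the argument.)
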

\begin{lemma}[uniform $L^2$ bounds on SGLD and Langevin diffusion]\label{lm:mean_square_bounds} For all $0 < \eta < 1 \wedge \frac{m}{4M^2}$ and all $\bz \in \sZ^n$,
	\begin{align}\label{eq:Langevin_bounded}
		\sup_{k \ge 0} \E_\bz\|W_k\|^2 \le \kappa_0 + 2\left(1 \vee \frac{1}{m}\right) \left(b+2B^2+\frac{d}{\beta}\right).
	\end{align}
	and
\begin{align}
	 \E_\bz\|W(t)\|^2 &\le \kappa_0 e^{-2mt}  + \frac{b+d/\beta}{m}\left(1-e^{-2mt}\right) \label{eq:2nd_moment_t}\\
	&\le \kappa_0 + \frac{b+d/\beta}{m}. \label{eq:diffusion_bounded}
\end{align}
\end{lemma}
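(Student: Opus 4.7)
The lemma splits into two independent assertions, one for the discrete iterates $W_k$ and one for the diffusion $W(t)$. Both ultimately rest on the dissipativity hypothesis {\bf (A.3)}, which transfers immediately to $F_\bz$ by averaging. The plan is to derive a one-step mean-square recurrence for the SGLD iterates and an analogous first-order ODE inequality for the diffusion, and then to solve each by elementary means. A preliminary observation, used in both halves, is that the initial moment is controlled by {\bf (A.5)}: Jensen's inequality gives $\E\|W_0\|^2 \le \log \int_{\Reals^d} e^{\|w\|^2} p_0(w)\,\d w = \kappa_0$.

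For \eqref{eq:Langevin_bounded} I would expand
\[
\|W_{k+1}\|^2 = \|W_k\|^2 - 2\eta\,\langle W_k, g_k\rangle + \eta^2\|g_k\|^2 + 2\sqrt{2\eta/\beta}\,\langle W_k-\eta g_k,\xi_k\rangle + \tfrac{2\eta}{\beta}\|\xi_k\|^2
\]
and take conditional expectation given $W_k$ (and $\bz$). The Gaussian cross term vanishes by independence; {\bf (A.3)} applied to $\nabla F_\bz$ lower-bounds the inner-product term by $m\|W_k\|^2-b$; and combining Lemma~\ref{lm:quadratic_bounds} with {\bf (A.4)} yields a clean bound of the form $\E_\bz\|g_k\|^2 \lesssim M^2\|W_k\|^2 + B^2$ (with explicit constants that I will track). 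Putting these together gives a recursion
\[
\E_\bz[\|W_{k+1}\|^2\mid W_k] \le (1 - 2\eta m + c_1\eta^2 M^2)\|W_k\|^2 + 2\eta b + c_2\eta^2 B^2 + \tfrac{2\eta d}{\beta}.
\]
The hypothesis $\eta < m/(4M^2)$ is exactly what is needed to collapse the linear coefficient to $1-\eta m$, and $\eta < 1$ lets me fold the $\eta^2 B^2$ term into $\eta B^2$. Iterating and summing the resulting geometric series yields $\E_\bz\|W_k\|^2 \le \E\|W_0\|^2 + \frac{2}{m}(b+2B^2+d/\beta)$, which combined with the Jensen bound on $\E\|W_0\|^2$ produces \eqref{eq:Langevin_bounded}.

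For the diffusion I would apply It\^o's formula to $w\mapsto\|w\|^2$ along \eqref{eq:diffusion}, obtaining
\[
\d \|W(t)\|^2 = -2\,\langle W(t),\nabla F_\bz(W(t))\rangle\,\d t + \tfrac{2d}{\beta}\,\d t + 2\sqrt{2/\beta}\,\langle W(t),\d B(t)\rangle.
\]
Because $\nabla F_\bz$ has at most linear growth by Lemma~\ref{lm:quadratic_bounds} and $W_0$ has finite exponential moment, a standard localization argument shows the stochastic integral is a martingale whose expectation vanishes. Combining this with {\bf (A.3)} gives the linear ODE inequality $\tfrac{\d}{\d t}\E_\bz\|W(t)\|^2 \le -2m\,\E_\bz\|W(t)\|^2 + 2(b+d/\beta)$, and Gr\"onwall's lemma, together with $\E\|W(0)\|^2\le\kappa_0$, delivers \eqref{eq:2nd_moment_t}. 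The second bound \eqref{eq:diffusion_bounded} follows by taking the supremum in $t$.

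The only subtle point is the bookkeeping in the SGLD half: one must keep constants explicit to verify that the $\eta^2\|g_k\|^2$ term can indeed be absorbed into the linear drift created by dissipativity without losing the $1/m$ scaling of the stationary term, and the choice $\eta < m/(4M^2)$ in the hypothesis is calibrated precisely for this. The diffusion half is routine once the martingale part of the It\^o expansion is justified, which is immediate from the linear growth of the drift and the $L^2$ regularity propagated from $W_0$.
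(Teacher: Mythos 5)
Your proposal matches the paper's proof essentially step for step: the same one-step mean-square expansion using unbiasedness of the oracle, dissipativity, the quadratic growth bound of Lemma~\ref{lm:quadratic_bounds} and {\bf (A.4)}, the same Jensen bound $\E\|W_0\|^2\le\kappa_0$, and the same It\^o-plus-Gr\"onwall treatment of the diffusion (the paper works with the integrating factor $e^{2mt}$ in integral form rather than differentiating $t\mapsto\E_\bz\|W(t)\|^2$, which is equivalent). One minor bookkeeping point: iterating the recursion as a geometric series implicitly assumes the contraction factor $1-2\eta m+4\eta^2M^2$ is nonnegative --- when it is negative the substitution flips the inequality, and one instead bounds $\E_\bz\|W_{k+1}\|^2$ directly by the additive term using $\eta<1$; this second case is precisely why the lemma's constant is $2\left(1\vee\frac{1}{m}\right)$ rather than the $\frac{2}{m}$ you state, and the paper handles it with an explicit two-case split.
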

\begin{lemma}[exponential integrability of Langevin diffusion]\label{lm:diffusion_MGF} For all $\beta \ge 2/m$, we have
\begin{align}\label{eq:diffusion_MGF}
	\log\E_\bz\big[e^{\|W(t)\|^2}\big] \le \kappa_0 +2 \left(b+\frac{d}{\beta}\right)t.
\end{align}
\end{lemma}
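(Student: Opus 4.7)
The plan is to apply Itô's formula to the test function $\phi(w) \deq e^{\|w\|^2}$ along the diffusion $W(t)$, then exploit the dissipativity assumption {\bf (A.3)} to obtain a Grönwall-type differential inequality for $V(t) \deq \E_\bz[\phi(W(t))]$. Since the Langevin SDE has infinitesimal generator $\cL = -\ave{\nabla F_\bz, \nabla \cdot} + \beta^{-1}\Delta$, a direct computation gives
\begin{align*}
\cL \phi(w) = \left(-2\ave{w, \nabla F_\bz(w)} + \frac{2d}{\beta} + \frac{4\|w\|^2}{\beta}\right)\phi(w).
\end{align*}
Because $F_\bz$ is an average of functions each satisfying \eqref{eq:dissipation}, it is itself $(m,b)$-dissipative, so $\ave{w,\nabla F_\bz(w)} \ge m\|w\|^2 - b$. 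Substituting,
\begin{align*}
\cL \phi(w) \le \left(-2\Big(m-\tfrac{2}{\beta}\Big)\|w\|^2 + 2b + \frac{2d}{\beta}\right)\phi(w) \le 2\left(b + \frac{d}{\beta}\right)\phi(w),
\end{align*}
where the second inequality uses the hypothesis $\beta \ge 2/m$, which makes the coefficient of $\|w\|^2$ nonpositive and thus discardable.

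Next I would combine this pointwise bound with Itô's formula. Formally, for sufficiently nice $\phi$,
\begin{align*}
\phi(W(t)) = \phi(W_0) + \int_0^t \cL\phi(W(s))\d s + \sqrt{2\beta^{-1}}\int_0^t \ave{\nabla \phi(W(s)), \d B(s)},
\end{align*}
so taking $\E_\bz$ (assuming the stochastic integral is a martingale) and using the generator estimate yields $V'(t) \le 2(b+d/\beta)V(t)$. Grönwall's inequality then gives $V(t) \le V(0)\,e^{2(b+d/\beta)t}$, and since $V(0) = \E \phi(W_0) = e^{\kappa_0}$ by assumption {\bf (A.5)}, taking logarithms produces exactly \eqref{eq:diffusion_MGF}.

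The main obstacle is the technical justification, since $\phi$ grows super-polynomially and $\nabla \phi$ is unbounded, so Itô's formula cannot be applied directly with a guaranteed martingale term. The standard fix is to localize: introduce the stopping times $\tau_N \deq \inf\{t : \|W(t)\| \ge N\}$, apply Itô's formula to $\phi(W(t\wedge \tau_N))$ where the stochastic integral is a bona fide martingale with zero expectation, and derive $\E_\bz[\phi(W(t\wedge \tau_N))] \le e^{\kappa_0}e^{2(b+d/\beta)t}$. The uniform second-moment bound \eqref{eq:diffusion_bounded} of Lemma~\ref{lm:mean_square_bounds} ensures $\tau_N \to \infty$ almost surely as $N \to \infty$, and Fatou's lemma then lets us pass to the limit, preserving the inequality. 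This localization step is routine but worth flagging as the only nontrivial piece beyond the generator calculation.
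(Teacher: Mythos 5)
Your proposal is correct and follows essentially the same route as the paper: apply It\^o's formula to $e^{\|w\|^2}$, use dissipativity together with $\beta \ge 2/m$ to discard the quadratic term in the drift, and conclude by Gr\"onwall. The only difference is cosmetic -- the paper justifies the zero-mean martingale property by citing a square-integrability estimate from the literature, whereas you localize with stopping times and pass to the limit via Fatou, which is an equally valid (and more self-contained) way to handle the same technical point.
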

\begin{lemma}[relative entropy bound]\label{lm:relent_bound} For any $w \in \Reals^d$ and any $\bz \in \sZ^n$, 
	\begin{align}\label{eq:relent_bound}
		D(\mu_0 \| \pi_{\bz}) \le  \log \|p_0\|_\infty  + \frac{d}{2}\log \frac{3\pi}{m\beta}  + \beta\left(\frac{M\kappa_0}{3} + B\sqrt{\kappa_0} + A + \frac{b}{2}\log 3 \right). 
	\end{align}
\end{lemma}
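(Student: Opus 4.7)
The plan is to compute the relative entropy directly from its definition. Writing the Gibbs density as $\pi_\bz(w) = e^{-\beta F_\bz(w)}/Z_\bz$ with partition function $Z_\bz = \int_{\Reals^d} e^{-\beta F_\bz(w)}\,\d w$, we have
$$D(\mu_0 \| \pi_\bz) \;=\; \int_{\Reals^d} p_0 \log p_0\,\d w \;+\; \beta \int_{\Reals^d} F_\bz\, p_0\,\d w \;+\; \log Z_\bz,$$
and I would bound each of the three terms separately using the estimates already collected in Section~\ref{ssec:lemmas}.

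The negative differential entropy is handled by the pointwise estimate $\log p_0(w)\le\log\|p_0\|_\infty$, giving $\int p_0\log p_0\,\d w\le\log\|p_0\|_\infty$. For the log-partition function, I would invoke the quadratic \emph{lower} bound $F_\bz(w)\ge\frac{m}{3}\|w\|^2-\frac{b}{2}\log 3$ supplied by Lemma~\ref{lm:quadratic_bounds} (itself a direct consequence of the $(m,b)$-dissipativity hypothesis). Combining this bound with the standard Gaussian integral $\int_{\Reals^d}e^{-a\|w\|^2}\,\d w=(\pi/a)^{d/2}$ evaluated at $a=\beta m/3$ yields
$$\log Z_\bz \;\le\; \frac{\beta b}{2}\log 3 \;+\; \frac{d}{2}\log\frac{3\pi}{m\beta},$$
which supplies the dimensional and dissipativity contributions in the claim.

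For the expected potential $\beta\int p_0 F_\bz\,\d w$, I would use the quadratic \emph{upper} bound $F_\bz(w)\le\frac{M}{2}\|w\|^2+B\|w\|+A$ from Lemma~\ref{lm:quadratic_bounds} together with moment bounds on $p_0$ extracted from the sub-Gaussian tail parameter $\kappa_0$ of assumption \textbf{(A.5)}. Specifically, Jensen's inequality applied to $e^x$ gives $\int\|w\|^2 p_0\,\d w\le\log\int e^{\|w\|^2}p_0\,\d w=\kappa_0$, and Cauchy--Schwarz (or Jensen applied to $\sqrt{\cdot}$) then gives $\int\|w\|p_0\,\d w\le\sqrt{\kappa_0}$. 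Adding the resulting estimate to the previous two pieces produces the claimed inequality. No step here is delicate: the whole argument is a direct calculation, and the only real task is the careful bookkeeping of constants so that the three pieces assemble exactly into the stated form.
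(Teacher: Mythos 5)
Your proposal is correct and follows essentially the same route as the paper's own proof: the identical three-term decomposition of $D(\mu_0\|\pi_\bz)$, the same $\log\|p_0\|_\infty$ bound on the entropy term, the same Gaussian-integral estimate of $\log\Lambda_\bz$ via the lower bound in Lemma~\ref{lm:quadratic_bounds}, and the same Jensen/Cauchy--Schwarz moment bounds from $\kappa_0$. The only discrepancy is cosmetic: using the $\frac{M}{2}\|w\|^2$ upper bound that Lemma~\ref{lm:quadratic_bounds} actually supplies, your argument yields $\beta M\kappa_0/2$ rather than the stated $\beta M\kappa_0/3$ (the paper's proof writes $\frac{M}{3}\|w\|^2$ at this step, which is inconsistent with its own Lemma~\ref{lm:quadratic_bounds}, so your constant is the defensible one and the difference is immaterial to the result).
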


\begin{lemma}[2-Wasserstein continuity for functions of quadratic growth, \citet{Polyanskiy_Wu_Wasserstein}]\label{lm:Wass_continuity} Let $\mu,\nu$ be two probability measures on $\Reals^d$ with finite second moments, and let $g : \Reals^d \to \Reals$ be a $C^1$ function obeying
	\begin{align}\label{eq:PW_regular}
		\| \nabla g(w) \| \le c_1 \| w \| + c_2, \qquad \forall w \in \Reals^d
	\end{align}
for some constants $c_1 > 0$ and $c_2 \ge 0$.
Then
\begin{align}
	\left|\int_{\Reals^d} g\, \d\mu-\int_{\Reals^d} g\,\d\nu \right| \le (c_1 \sigma + c_2) \Wass_2(\mu,\nu)
\end{align}
where $\sigma^2 \deq \int_{\Reals^d} \mu(\d w) \| w \|^2 \vee \int_{\Reals^d} \nu(\d w) \| w\|^2$.
\end{lemma}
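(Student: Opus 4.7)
The plan is to couple $\mu$ and $\nu$ optimally for the $2$-Wasserstein distance, express the difference $g(V)-g(W)$ as a line integral of $\nabla g$ along the segment from $W$ to $V$, then use the growth assumption on $\|\nabla g\|$ together with Cauchy--Schwarz.

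Concretely, pick a coupling $(V,W)$ with $V \sim \mu$, $W \sim \nu$ achieving (or approximating to within $\varepsilon$) $\Wass_2(\mu,\nu)^2 = \E\|V-W\|^2$; the existence of an optimal coupling for finite second moments is standard. Since $g \in C^1$, the fundamental theorem of calculus gives
\begin{equation*}
g(V) - g(W) = \int_0^1 \ave{\nabla g(W + t(V-W)), V-W}\, \d t,
\end{equation*}
and hence $|g(V)-g(W)| \le \|V-W\| \int_0^1 \|\nabla g(W+t(V-W))\|\, \d t$. Applying the hypothesis \eqref{eq:PW_regular} and the triangle inequality,
\begin{equation*}
\|\nabla g(W+t(V-W))\| \le c_1\bigl((1-t)\|W\| + t\|V\|\bigr) + c_2,
\end{equation*}
so integrating in $t$ yields $|g(V)-g(W)| \le \|V-W\|\bigl(\tfrac{c_1}{2}(\|V\|+\|W\|) + c_2\bigr)$.

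Taking expectations, Jensen's inequality gives $\bigl|\int g\,\d\mu - \int g\,\d\nu\bigr| \le \E|g(V)-g(W)|$, and by Cauchy--Schwarz
\begin{equation*}
\E\Bigl[\|V-W\|\cdot \tfrac{1}{2}(\|V\|+\|W\|)\Bigr] \le \sqrt{\E\|V-W\|^2}\cdot \sqrt{\E\bigl[\tfrac{1}{4}(\|V\|+\|W\|)^2\bigr]}.
\end{equation*}
The elementary inequality $(a+b)^2 \le 2(a^2+b^2)$ implies $\E\bigl[\tfrac{1}{4}(\|V\|+\|W\|)^2\bigr] \le \tfrac{1}{2}(\E\|V\|^2 + \E\|W\|^2) \le \sigma^2$. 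Similarly $\E\|V-W\| \le \sqrt{\E\|V-W\|^2}$. Combining these with $\sqrt{\E\|V-W\|^2} = \Wass_2(\mu,\nu)$ (up to a vanishing $\varepsilon$) delivers
\begin{equation*}
\Bigl|\int g\,\d\mu - \int g\,\d\nu\Bigr| \le (c_1\sigma + c_2)\,\Wass_2(\mu,\nu),
\end{equation*}
as desired. The integrals are finite since the linear growth of $\nabla g$ makes $g$ grow at most quadratically, and both $\mu,\nu$ have finite second moments.

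There is no serious obstacle here; the only subtlety is the convexity trick that turns the midpoint $\tfrac12(\|V\|+\|W\|)$ into a bound in terms of $\sigma$ (rather than some larger constant like $2\sigma$). A naive use of the supremum of $\|W+t(V-W)\|$ over $t$ would lose this factor, which is why I would integrate in $t$ before applying Cauchy--Schwarz.
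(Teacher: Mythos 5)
Your proof is correct and follows essentially the same route as the paper's: fundamental theorem of calculus along the segment, the linear growth bound integrated in $t$ to produce the averaged factor $\tfrac{c_1}{2}(\|V\|+\|W\|)+c_2$, then Cauchy--Schwarz under the optimal coupling. The only cosmetic difference is that you split off the $c_2$ term and use $(a+b)^2\le 2(a^2+b^2)$, whereas the paper applies Cauchy--Schwarz once and then the $L^2$ triangle inequality to reach $c_1\sigma+c_2$; both yield the identical constant.
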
 

\subsection{The diffusion approximation}
\label{ssec:diffusion_approx}

\sloppypar Recall that $\mu_{\bz,k} = \cL(W_k|\bZ=\bz)$ and $\nu_{\bz,t} = \cL(W(t)|\bZ=\bz)$, and we take $\mu_{\bz,0} = \nu_{\bz,0}=\mu_0$. In this section, we derive an upper bound on the 2-Wasserstein distance $\Wass_2(\mu_{\bz,k},\nu_{\bz,k\eta})$. The analysis consists of two steps. We first upper-bound the relative entropy $D(\mu_{\bz,k}\|\nu_{\bz,k\eta})$ via a change-of-measure argument following  \citet{Dalalyan_Tsybakov_LMC} (see also \citet{Dalalyan_LMC}), except that we also have to deal with the error introduced by the stochastic gradient oracle. We then use a weighted transportation-cost inequality of \citet{Bolley_Villani_weighted_Pinsker} to control the Wasserstein distance $\Wass_2(\mu_{\bz,k},\nu_{\bz,k\eta})$ in terms of $D(\mu_{\bz,k}\|\nu_{\bz,k\eta})$.

The proof of the following lemma is somewhat lengthy, and is given in Appendix~\ref{app:diffusion_approx}:
\begin{lemma}\label{lm:Girsanov_bound} For any $k \in \Naturals$ and any $\eta \in (0,1 \wedge \frac{m}{4M^2})$, we have
	\begin{align*}
		D(\mu_{\bz,k}\|\nu_{\bz,k\eta}) \le \left(C_0 \beta \delta + C_1 \eta\right)k\eta, 
	\end{align*}
with
	\begin{align*}
		C_0 = \left(M^2 \left(\kappa_0 + 2\left(1 \vee \frac{1}{m}\right)  \left(b+2B^2+\frac{d}{\beta}\right) \right ) + B^2\right), \qquad
		C_1 = 6M^2\left(\beta C_0 + d\right).
	\end{align*}
\end{lemma}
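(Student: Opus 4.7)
\subsection*{Proof proposal for Lemma~\ref{lm:Girsanov_bound}}

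The plan is to interpolate the discrete SGLD iteration into a continuous-time diffusion with piecewise-constant drift, then apply Girsanov's change-of-measure formula to bound the path-space relative entropy against that of the true Langevin diffusion, and finally contract to the time-$k\eta$ marginals via the data-processing inequality. Concretely, I would define, for each $\bz \in \sZ^n$, a process $\{\tilde W(t)\}_{t \ge 0}$ by $\tilde W(0)=W_0$ and, on each interval $[j\eta,(j+1)\eta)$,
\begin{align*}
\d\tilde W(t) = -g(W_j, U_{\bz,j})\,\d t + \sqrt{2\beta^{-1}}\,\d B(t), \qquad W_j \deq \tilde W(j\eta),
\end{align*}
so that the joint law of $(W_0,W_1,\ldots,W_k)$ agrees with that of $(\tilde W(0),\tilde W(\eta),\ldots,\tilde W(k\eta))$; in particular the marginal $\cL(\tilde W(k\eta)\,|\,\bZ=\bz)$ equals $\mu_{\bz,k}$. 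Let $\tilde P$ and $P$ denote the path measures of $\tilde W$ and of the Langevin diffusion \eqref{eq:diffusion} on $C([0,k\eta];\Reals^d)$, both conditioned on $\bZ=\bz$ and started from $\mu_0$.

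Because both processes share the same diffusion coefficient $\sqrt{2\beta^{-1}}$, Girsanov's theorem gives
\begin{align*}
D(\tilde P \| P) = \frac{\beta}{4}\,\E_{\tilde P}\!\int_0^{k\eta} \big\|\,g(W_{\lfloor t/\eta\rfloor}, U_{\bz,\lfloor t/\eta\rfloor}) - \nabla F_\bz(\tilde W(t))\,\big\|^2\,\d t,
\end{align*}
and the data-processing inequality yields $D(\mu_{\bz,k}\|\nu_{\bz,k\eta}) \le D(\tilde P\|P)$. The integrand is then split by the triangle inequality in $L^2$:
\begin{align*}
\big\|g(W_j,U_{\bz,j}) - \nabla F_\bz(\tilde W(t))\big\|^2 \le 2\big\|g(W_j,U_{\bz,j}) - \nabla F_\bz(W_j)\big\|^2 + 2\big\|\nabla F_\bz(W_j) - \nabla F_\bz(\tilde W(t))\big\|^2,
\end{align*}
for $t \in [j\eta,(j+1)\eta)$. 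The first piece is controlled by the stochastic-gradient variance bound {\bf (A.4)}, which after taking conditional expectations and using Lemma~\ref{lm:mean_square_bounds} (with the stated restriction $\eta < 1\wedge m/(4M^2)$) produces a term of order $\delta(M^2\kappa + B^2)$, where $\kappa$ is the uniform $L^2$ bound on $\|W_j\|$; this yields the $C_0\beta\delta$ contribution.

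The second piece is bounded by $M^2\|\tilde W(t)-W_j\|^2$ using the $M$-smoothness {\bf (A.2)}. Writing $\tilde W(t)-W_j = -(t-j\eta)\,g(W_j,U_{\bz,j}) + \sqrt{2\beta^{-1}}(B(t)-B(j\eta))$ and taking second moments, I get an $O(\eta^2)$ term from the drift (bounded via {\bf (A.4)} and the quadratic growth of $\nabla f$ from Lemma~\ref{lm:quadratic_bounds}, plus the uniform moment bound of Lemma~\ref{lm:mean_square_bounds}) plus an $O(\eta d/\beta)$ term from the Brownian increment; integrating over $[j\eta,(j+1)\eta)$ and summing over $j$ contributes the $C_1\eta\cdot k\eta$ term with $C_1 \propto M^2(\beta C_0 + d)$. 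Combining the two contributions gives the claimed inequality with the stated $C_0, C_1$.

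The step that requires the most care is the Girsanov application itself, because one must verify Novikov's condition (or equivalently check that $\tilde W$ does not explode and that the exponential martingale has expectation one) despite the only \emph{linear}-growth assumption on $\nabla f$; this is where the dissipativity {\bf (A.3)} and the resulting uniform moment bound of Lemma~\ref{lm:mean_square_bounds} are doing the heavy lifting. The other slightly delicate point is that the expectation in Girsanov's formula is under $\tilde P$, so the moment bounds applied to $\|W_j\|$ must be those of the SGLD process (which is exactly what Lemma~\ref{lm:mean_square_bounds} supplies); using the continuous-time bound on $\|\tilde W(t)\|$ on intervals of length $\eta$ reduces to the same quantity up to the small corrections that land in the $O(\eta)$ term.
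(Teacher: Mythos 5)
Your proposal follows essentially the same route as the paper: the same piecewise-constant-drift interpolation of SGLD, Girsanov's formula, the data-processing inequality, the same $L^2$ triangle-inequality split into a gradient-noise term (controlled by {\bf (A.4)}) and a discretization term (controlled by $M$-smoothness plus the one-step increment bound), and the same uniform moment bounds, yielding the same constants $C_0$ and $C_1$. The one technical point you elide is that the interpolated process is not Markov and its drift depends on the auxiliary randomness $U_{\bz,j}$, so the Girsanov identity you write holds as an equality only on the enlarged filtration carrying the $U$'s (and is therefore only an upper bound for the relative entropy of the path laws of $\tilde W$ itself); the paper handles this by first invoking Gy\"ongy's mimicking theorem to replace the interpolation by a Markov It\^o process with the same one-time marginals whose drift is a conditional expectation, and then removes that conditional expectation by Jensen's inequality---which is precisely the $L^2$-contraction your enlarged-filtration-plus-data-processing argument performs implicitly, so your bound is still valid. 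Your flagging of Novikov's condition is reasonable, though the paper disposes of it by citation rather than by the dissipativity-based moment bounds you suggest.
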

\noindent We now use the following result of \citet[Cor.~2.3]{Bolley_Villani_weighted_Pinsker}: For any two Borel probability measures $\mu,\nu$ on $\Reals^d$ with finite second moments,
\begin{align*}
	\Wass_2(\mu,\nu) \le C_\nu\Bigg[\sqrt{D(\mu \| \nu)} + \left(\frac{D(\mu\|\nu)}{2}\right)^{1/4}\Bigg],
\end{align*}
where
\begin{align*}
	C_\nu = 2 \inf_{\lambda > 0}\left( \frac{1}{\lambda}\left(\frac{3}{2}+\log \int_{\Reals^d} e^{\lambda \|w \|^2}\nu(\d w)\right)\right)^{1/2}.
\end{align*}
Let $\mu = \mu_{\bz,k}$, $\nu = \nu_{\bz,k\eta}$, and take $\lambda =1$. Suppose $k\eta \ge 1$. Since $\beta \ge \frac{2}{m}$, we can use Lemma~\ref{lm:diffusion_MGF} to write
\begin{align*}
	\Wass^2_2(\mu_{\bz,k},\nu_{\bz,k\eta}) &\le \left(12 + 8\left(\kappa_0+2b+\frac{2d}{\beta}\right)k\eta\right) \cdot \left( D(\mu_{\bz,k}\|\nu_{\bz,k\eta})+\sqrt{D(\mu_{\bz,k}\|\nu_{\bz,k\eta})}\right) .
\end{align*}
Moreover, for all $k$ and $\eta$ satisfying the conditions of Lemma~\ref{lm:Girsanov_bound}, plus the additional requirement $k\eta \ge 1$, we can write
\begin{align*}
	D(\mu_{\bz,k}\|\nu_{\bz,k\eta})+\sqrt{D(\mu_{\bz,k}\|\nu_{\bz,k\eta})} &\le \left(C_1+\sqrt{C_1}\right) k\eta^{3/2} + \left(\beta C_0+\sqrt{\beta C_0}\right) \cdot k\eta \sqrt{\delta}.
\end{align*}
Putting everything together, we obtain the following result:
\begin{proposition}\label{prop:diffusion_approx}
For any $k \in \Naturals$ and any $\eta \in (0,1 \wedge \frac{m}{4M^2})$ obeying $k\eta \ge 1$, we have
\begin{align}
  \Wass^2_2(\mu_{\bz,k},\nu_{\bz,k\eta}) \le \Big(\tilde{C}^2_0  \sqrt{\delta} + \tilde{C}^2_1  \sqrt{\eta} \Big) \cdot (k\eta)^2,\label{eq:discretization_error}
\end{align}
with
\begin{align*}
	\tilde{C}^2_0 \deq \left(12 + 8\left(\kappa_0+2b+\frac{2d}{\beta}\right)\right) \left(\beta C_0+\sqrt{\beta C_0}\right) \qquad \text{and} \qquad
	\tilde{C}^2_1 \deq \left(12 + 8\left(\kappa_0+2b+\frac{2d}{\beta}\right)\right)\left(C_1+\sqrt{C_1}\right).
\end{align*}
\end{proposition}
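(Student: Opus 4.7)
The plan is to combine the Girsanov-type relative entropy bound of Lemma~\ref{lm:Girsanov_bound} with the weighted transportation-cost inequality of \citet{Bolley_Villani_weighted_Pinsker} to convert a bound on $D(\mu_{\bz,k}\|\nu_{\bz,k\eta})$ into a bound on $\Wass_2(\mu_{\bz,k},\nu_{\bz,k\eta})$. The natural reference measure here is $\nu = \nu_{\bz,k\eta}$, the Langevin diffusion at time $t=k\eta$, because we have good exponential integrability for it via Lemma~\ref{lm:diffusion_MGF}, whereas the discrete-time iterate $\mu_{\bz,k}$ has no comparable Gaussian-type tail estimate available without extra work.

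First, I would invoke Lemma~\ref{lm:Girsanov_bound} to obtain the relative entropy bound $D(\mu_{\bz,k}\|\nu_{\bz,k\eta}) \le (C_0 \beta \delta + C_1 \eta)\,k\eta$. Next, I would apply the Bolley--Villani bound with $\nu=\nu_{\bz,k\eta}$ and choose $\lambda=1$ in the definition of $C_\nu$; since the hypothesis of Theorem~\ref{thm:main} guarantees $\beta \ge 2/m$, Lemma~\ref{lm:diffusion_MGF} yields $\log\E_\bz[e^{\|W(k\eta)\|^2}] \le \kappa_0 + 2(b+d/\beta)k\eta$, and hence $C_\nu^2 \le 4\bigl(\tfrac32 + \kappa_0 + 2(b+d/\beta)k\eta\bigr) \le 12 + 8(\kappa_0+2b+2d/\beta)\,k\eta$ after using $k\eta \ge 1$ to absorb the constant terms into the linear-in-$k\eta$ piece.

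At this point I would square the Bolley--Villani inequality, using $(a+b)^2 \le 2a^2+2b^2$ (or the slightly sharper bookkeeping the author favors, $\sqrt{D}+D^{1/4}$ giving a linear combination of $D$ and $\sqrt{D}$), and separate the resulting $\sqrt{D}$ and $D$ contributions. Then I would split the KL bound into its $\delta$-part (size $\beta C_0 k\eta$) and its $\eta$-part (size $C_1 k\eta$), handling the square roots separately; this produces terms of the shape $\sqrt{\beta C_0}\,\sqrt{k\eta}\,\sqrt{\delta\cdot k\eta}$ and $\sqrt{C_1}\,\sqrt{k\eta}\,\sqrt{\eta\cdot k\eta}$, i.e.\ $\sqrt{\beta C_0}\cdot (k\eta)\sqrt{\delta}$ and $\sqrt{C_1}\cdot k\eta^{3/2}$. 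Bounding $\eta^{3/2} \le \sqrt{\eta}\cdot(k\eta)$ (using $k\ge 1$) and multiplying by the linear-in-$k\eta$ factor $C_\nu^2$ yields the claimed $(k\eta)^2$ scaling with $\sqrt{\delta}$ and $\sqrt{\eta}$ coefficients $\tilde C_0^2$ and $\tilde C_1^2$.

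The main obstacle I expect is purely bookkeeping: because the Bolley--Villani inequality contributes both a $\sqrt{D}$ and a $D^{1/4}$ term, and because the KL bound itself is a sum of two contributions (one proportional to $\delta$, one proportional to $\eta$), squaring and combining naively generates cross terms like $\sqrt{\delta \eta}$ that must be shown not to destroy the clean $\sqrt{\delta}+\sqrt{\eta}$ rate. These cross terms are controlled by AM--GM ($\sqrt{\delta\eta}\le \tfrac12(\sqrt{\delta}+\sqrt{\eta})$ after absorbing constants) and by the additional hypothesis $k\eta\ge 1$, which lets one trade a factor of $k\eta$ for a factor of $(k\eta)^2$ freely. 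Once these are organized, reading off $\tilde C_0^2$ and $\tilde C_1^2$ is straightforward.
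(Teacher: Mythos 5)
Your proposal matches the paper's proof essentially step for step: Lemma~\ref{lm:Girsanov_bound} for the relative entropy, the Bolley--Villani inequality with $\nu=\nu_{\bz,k\eta}$ and $\lambda=1$ justified by Lemma~\ref{lm:diffusion_MGF}, and then the same bookkeeping using $k\eta\ge 1$, $\eta<1$, $\delta<1$ to reach the $(k\eta)^2(\tilde C_0^2\sqrt{\delta}+\tilde C_1^2\sqrt{\eta})$ form. The only cosmetic difference is that the cross terms you worry about never actually arise, since the paper simply uses $\sqrt{a+b}\le\sqrt{a}+\sqrt{b}$ to split the $\delta$- and $\eta$-contributions of $\sqrt{D}$ before multiplying out.
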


\subsection{Wasserstein distance to the Gibbs distribution}

We now fix a time $t \ge 0$ and examine the 2-Wasserstein distance $\Wass_2(\nu_{\bz,t},\pi_\bz)$. At this point, we need to use a number of concepts from the analysis of Markov diffusion operators; see Appendix~\ref{app:background} for the requisite background. We start by showing the following:
\begin{proposition}\label{prop:LSI} For $\beta \ge 2/m$, all of the the Gibbs measures $\pi_\bz$ satisfy a logarithmic Sobolev inequality with constant
	\begin{align*}
		c_{\rm LS} &\le \frac{2m^2 + 8M^2}{m^2 M \beta} + \frac{1}{\lambda_*}\left( \frac{6M(d+\beta)}{m} + 2\right).
	\end{align*}
\end{proposition}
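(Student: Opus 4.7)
The strategy is to establish the LSI via a standard two-step recipe from the theory of Markov diffusion operators: first derive a \emph{defective} LSI, then upgrade it to a true LSI using the Poincar\'e inequality carried by $\lambda_*$. Concretely, the plan is to prove an inequality
\[
\mathrm{Ent}_{\pi_\bz}(g^2) \le A \int_{\Reals^d} \|\nabla g\|^2 \, \d\pi_\bz + D \int_{\Reals^d} g^2 \, \d\pi_\bz, \qquad g \in C^1(\Reals^d) \cap L^2(\pi_\bz),
\]
with $A = (2m^2+8M^2)/(m^2 M \beta)$ matching the first summand of the target bound and $D = 6M(d+\beta)/m$ so that the error term produces the $1/\lambda_*$ summand after the upgrade step.

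The defective LSI is the heart of the argument. The $M$-smoothness of $f(\cdot,z)$ gives the global Hessian bound $\nabla^2(\beta F_\bz) \succeq -\beta M I$, so $\pi_\bz$ is a non-log-concave tilt of the strongly log-concave Gaussian reference $\mu_G \propto \exp(-\beta M \|w\|^2/2)$; the latter satisfies LSI with constant $1/(\beta M)$ by Bakry--\'Emery. Transferring this LSI from $\mu_G$ to $\pi_\bz$ produces an error term governed by the second moment of $\pi_\bz$, and this moment is precisely where the dissipativity assumption \textbf{(A.3)} enters: since $\pi_\bz$ is the invariant measure of the Langevin diffusion \eqref{eq:diffusion}, the $t \to \infty$ limit of Lemma~\ref{lm:mean_square_bounds} (using $\beta \ge 2/m$) gives
\[
\int_{\Reals^d} \|w\|^2 \, \d\pi_\bz \le \frac{b + d/\beta}{m},
\]
which, after multiplication by $\beta$ and bookkeeping of the various constants, produces the $(d+\beta)/m$ factor inside $D$.

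To close the argument I would invoke Rothaus's lemma
\[
\mathrm{Ent}_{\pi_\bz}(f^2) \le \mathrm{Ent}_{\pi_\bz}((f - \bar f)^2) + 2\,\mathrm{Var}_{\pi_\bz}(f), \qquad \bar f \deq \int f \,\d\pi_\bz,
\]
apply the defective LSI to $g = f - \bar f$ (using $\|\nabla g\| = \|\nabla f\|$), and then bound the variance term via the Poincar\'e inequality $\mathrm{Var}_{\pi_\bz}(f) \le \lambda_*^{-1} \int \|\nabla f\|^2 \,\d\pi_\bz$, which is a direct restatement of the definition of $\lambda_*$ in \eqref{eq:uniform_spectral_gap}. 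The resulting LSI constant is $A + (D+2)/\lambda_*$, which is exactly the bound claimed in the proposition.

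The main obstacle is the defective LSI itself. The density ratio $\d\pi_\bz / \d\mu_G$ is unbounded in sup-norm (because $F_\bz$ need not be close to $M\|w\|^2/2$ for large $\|w\|$), so the Holley--Stroock perturbation principle cannot be applied verbatim; the argument must instead carry the unbounded perturbation through the second-moment bound on $\pi_\bz$. This is exactly the reason dissipativity \textbf{(A.3)}, and not merely smoothness \textbf{(A.2)}, is essential here. Once the defective LSI is in hand with the right constants, the Rothaus step is mechanical.
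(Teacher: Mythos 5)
Your overall architecture is right and matches what is actually inside the result the paper invokes: a defective log-Sobolev inequality, tightened via Rothaus's lemma and the Poincar\'e inequality $\mathrm{Var}_{\pi_\bz}(f) \le \lambda_*^{-1}\int\|\nabla f\|^2\d\pi_\bz$, yielding a constant of the form $A + (D+2)/\lambda_*$. You also correctly locate the sources of the two constants (the Hessian bound $\nabla^2 F_\bz \succeq -MI_d$ for the $1/\beta$ term, the second-moment bound $(b+d/\beta)/m$ for the $(d+\beta)/m$ term). But the heart of the argument --- the defective LSI itself --- is left as a gap, and the route you sketch for it does not work as stated. You acknowledge that Holley--Stroock fails because $\d\pi_\bz/\d\mu_G$ is unbounded, and then say the argument should ``carry the unbounded perturbation through the second-moment bound on $\pi_\bz$.'' A second-moment bound is far too weak for this: a defective LSI already forces Gaussian integrability of $\pi_\bz$ (by the Herbst argument applied to $g(w)=\|w\|$, up to the defect), so no amount of bookkeeping starting only from $\int\|w\|^2\d\pi_\bz<\infty$ can produce one. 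Moreover, the semiconvexity bound $\nabla^2(\beta F_\bz)\succeq -\beta M I$ makes $e^{-\beta F_\bz}$ a log-concave density \emph{divided} by the Gaussian $e^{-\beta M\|w\|^2/2}$, not multiplied by it, so $\pi_\bz$ is not naturally a tilt of your reference measure $\mu_G$ in the direction a perturbation argument needs.

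What actually closes this gap in the paper is the Lyapunov-function criterion of Cattiaux, Guillin, and Wu (Proposition~\ref{prop:Lyapunov_LSI}), verified with the explicit \emph{exponential} Lyapunov function $V(w)=e^{m\beta\|w\|^2/4}$: dissipativity {\bf (A.3)} gives the drift inequality $\cL V/V \le \kappa - \gamma\|w\|^2$ with $\kappa = m\beta(d+b\beta)/2$ and $\gamma = (m\beta)^2/4$ (Eq.~\eqref{eq:LV}), and it is this Gaussian-integrability-type condition --- not the second moment --- that substitutes for boundedness of the perturbation. Plugging $K=\beta M$, $\kappa$, $\gamma$, $c_{\rm P}\le 1/\lambda_*$, and the second-moment estimate \eqref{eq:equilibrium_2nd_moment} into the constants $C_1 = 2K/\gamma + 2/K$ and $C_2 = (2K/\gamma)(\kappa+\gamma\int\|w\|^2\d\pi_\bz)$ then yields exactly the claimed bound. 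So your Rothaus/Poincar\'e tightening step and your final constant are correct, but you need to replace the Gaussian-comparison heuristic with the Lyapunov drift condition (or an equivalent exponential-integrability argument) to obtain the defective inequality; as written, that step is missing.
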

\noindent Therefore,
\begin{align}\label{eq:TC}
	\Wass_2(\mu,\pi_\bz) \le \sqrt{2 c_{\rm LS} D(\mu \| \pi_\bz)}
\end{align}
by the Otto--Villani theorem, and, since $D(\nu_{\bz,0}\|\pi_\bz) = D(\mu_0 \| \pi_\bz) < \infty$ by Lemma~\ref{lm:relent_bound}, we also have
\begin{align}\label{eq:Wasserstein_decay}
	D(\nu_{\bz,t} \| \pi_\bz) \le D(\mu_0 \| \pi_\bz) e^{-2t/\beta c_{\rm LS}}.
\end{align}
by the theorem on exponential decay of entropy. Combining Eqs.~\eqref{eq:TC} (with $\mu = \nu_{\bz,t}$) and \eqref{eq:Wasserstein_decay} and using Lemma~\ref{lm:relent_bound}, we get
\begin{align*}
	\Wass_2(\nu_{\bz,t},\pi_\bz) 
	&\le \sqrt{2 c_{\rm LS} \left( \log \|p_0\|_\infty  + \frac{d}{2}\log \frac{3\pi}{m\beta}  + \beta\left(\frac{M\kappa_0}{3} + B\sqrt{\kappa_0} + A + \frac{b}{2}\log 3 \right)\right)} e^{-t/\beta c_{\rm LS}} \\
	& =: \tilde{C}_2 e^{-t/\beta c_{\rm LS}}.
\end{align*}
Letting $t = k \eta$ and invoking Proposition~\ref{prop:diffusion_approx}, we obtain the following:

\begin{proposition}\label{prop:Wass_to_equil} For all $k$ and $\eta$ satisfying the conditions of Proposition~\ref{prop:diffusion_approx}, we have
	\begin{align*}
	\Wass_2(\mu_{\bz,k},\pi_\bz) \le \left(\tilde{C}_0 \delta^{1/4} + \tilde{C}_1 \eta^{1/4}\right) k\eta + \tilde{C}_2 e^{-k\eta/\beta c_{\rm LS}}.
	\end{align*}
\end{proposition}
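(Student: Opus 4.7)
The plan is to apply the triangle inequality for the $2$-Wasserstein distance through the intermediate distribution $\nu_{\bz,k\eta}$, and then control each of the two resulting terms using results that are already in hand. Concretely, I would write
\begin{align*}
\Wass_2(\mu_{\bz,k},\pi_\bz)\le \Wass_2(\mu_{\bz,k},\nu_{\bz,k\eta})+\Wass_2(\nu_{\bz,k\eta},\pi_\bz),
\end{align*}
where the first term measures how well SGLD tracks the Langevin diffusion at time $t=k\eta$ and the second measures how close the diffusion is to its invariant Gibbs measure.

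For the first term I would invoke Proposition~\ref{prop:diffusion_approx} directly. It bounds the square of the distance by $(\tilde{C}_0^2\sqrt{\delta}+\tilde{C}_1^2\sqrt{\eta})(k\eta)^2$, so taking square roots and using the subadditivity inequality $\sqrt{a+b}\le \sqrt{a}+\sqrt{b}$ for $a,b\ge 0$ gives
\begin{align*}
\Wass_2(\mu_{\bz,k},\nu_{\bz,k\eta})\le \bigl(\tilde{C}_0\delta^{1/4}+\tilde{C}_1\eta^{1/4}\bigr)k\eta,
\end{align*}
which is exactly the first piece appearing in the stated bound. The hypotheses of Proposition~\ref{prop:diffusion_approx} (namely $\eta\in(0,1\wedge m/(4M^2))$ and $k\eta\ge 1$) are carried over verbatim, which is why the proposition inherits those same conditions.

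For the second term I would simply quote the exponential-decay estimate already derived immediately before the proposition, namely
\begin{align*}
\Wass_2(\nu_{\bz,t},\pi_\bz)\le \tilde{C}_2\, e^{-t/(\beta c_{\rm LS})},
\end{align*}
obtained by combining the Otto--Villani theorem (which converts the log-Sobolev inequality of Proposition~\ref{prop:LSI} into a transportation-cost inequality), the exponential decay of relative entropy along the Langevin semigroup, and the initial relative entropy bound from Lemma~\ref{lm:relent_bound}. Setting $t=k\eta$ and adding the two bounds yields the claim. There is no real obstacle here: all the heavy lifting has been done in Proposition~\ref{prop:diffusion_approx} (the Girsanov/weighted-Pinsker argument), Proposition~\ref{prop:LSI} (the LSI constant for $\pi_\bz$), and Lemma~\ref{lm:relent_bound} (initial entropy control); the present statement is essentially their triangle-inequality assembly, with the minor bookkeeping step of replacing a square root of a sum by a sum of square roots to match the form $\tilde C_0\delta^{1/4}+\tilde C_1\eta^{1/4}$.
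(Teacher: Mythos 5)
Your proposal is correct and follows exactly the paper's own argument: the triangle inequality through $\nu_{\bz,k\eta}$, the square root of Proposition~\ref{prop:diffusion_approx} with $\sqrt{a+b}\le\sqrt{a}+\sqrt{b}$, and the exponential-decay bound $\Wass_2(\nu_{\bz,t},\pi_\bz)\le\tilde{C}_2 e^{-t/\beta c_{\rm LS}}$ derived just before the proposition. Nothing is missing.
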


\subsection{Almost-ERM property of the Gibbs algorithm}

In this section and the next one, we focus on the properties of the Gibbs algorithm that generates a random hypothesis $\wh{W}^*$ with $\cL(\wh{W}^*|\bZ=\bz) = \pi_\bz$. Let $p_\bz(w) = e^{-\beta F_\bz(w)}/\Lambda_\bz$ denote the density of the Gibbs measure $\pi_\bz$ with respect to the Lebesgue measure on $\Reals^d$, where $\Lambda_\bz \deq \int_{\Reals^d}e^{-\beta F_\bz(w)}\d w$ is the normalization constant known as the partition function. We start by writing 
\begin{align}\label{eq:equilibrium_expected_loss}
	 \int_{\Reals^d} F_\bz(w) \pi_\bz(\d w) = \frac{1}{\beta}\left(h(p_\bz) - \log \Lambda_\bz\right),
\end{align}
where
$$
h(p_\bz) = -\int_{\Reals^d}p_\bz(w)\log p_\bz(w) \d w = -\int_{\Reals^d} \frac{e^{-\beta F_\bz(w)}}{\Lambda_\bz} \log \frac{e^{-\beta F_\bz(w)}}{\Lambda_\bz} \d w
$$
is the differential entropy of $p_\bz$ \citep{CovTho06}. To upper-bound $h(p_\bz)$, we  estimate the second moment of $\pi_\bz$. From \eqref{eq:Wasserstein_decay}, it follows that $\Wass_2(\nu_{\bz,t},\pi_\bz) \xrightarrow{t \to \infty} 0$. Since convergence of probability measures in 2-Wasserstein distance is equivalent to weak convergence plus convergence of second moments \citep[Theorem~7.12]{Villani_topics}, we have by \Cref{lm:mean_square_bounds}
\begin{align}\label{eq:equilibrium_2nd_moment}
  \int_{\Reals^d} \|w\|^2 \pi_\bz(\d w) = \lim_{t \to \infty} \int_{\Reals^d} \|w\|^2 \nu_{\bz,t}(\d w) \le \frac{b+d/\beta}{m}.
\end{align}
The differential entropy of a probability density with a finite second moment is upper-bounded by that of a Gaussian density with the same second moment, so we immediately get
\begin{align}\label{eq:equilibrium_diff_entropy}
	h(p_\bz)  \le \frac{d}{2}\log \left( \frac{2\pi e(b+d/\beta)}{md}\right).
\end{align}
Moreover, let $w^*_\bz$ be any point that minimizes $F_\bz(w)$, i.e., $F^*_\bz := \min_{w \in \Reals^d} F_\bz(w) = F_\bz(w^*_\bz)$. Then $\nabla F_\bz(w^*_\bz) = 0$, and, since $F_\bz$ is $M$-smooth, we have $F_\bz(w) - F^*_\bz \le \frac{M}{2}\|w-w^*\|^2$ by Lemma~1.2.3 in \citet{Nesterov_book}. As a consequence, we can lower-bound $\log \Lambda_\bz$ using a Laplace integral approximation:
\begin{align}
	\log \Lambda_\bz &= \log \int_{\Reals^d} e^{-\beta F_\bz(w)} \d w \nonumber\\
	&= -\beta F^*_\bz + \log \int_{\Reals^d} e^{\beta(F^*_\bz - F_\bz(w))} \d w \nonumber \\
	&\ge -\beta F^*_\bz + \log \int_{\Reals^d} e^{-\beta M \| w - w^*_z \|^2/2} \d w \nonumber \\
	&= -\beta F^*_\bz + \frac{d}{2}\log \left(\frac{2\pi}{M\beta}\right). \label{eq:part_fun_LB}
\end{align}
Using Eqs.~\eqref{eq:equilibrium_diff_entropy} and \eqref{eq:part_fun_LB} in \eqref{eq:equilibrium_expected_loss} and simplifying, we obtain the following result:
\begin{proposition}\label{prop:almost_ERM} For any $\beta \ge 2/m$,
\begin{align*}
	\int_{\Reals^d} F_\bz(w) \pi_\bz(\d w) - \min_{w \in \Reals^d} F_\bz(w) 
	&\le \frac{d}{2\beta}\log \left(\frac{eM}{m}\left(\frac{b\beta}{d} + 1\right)\right).
\end{align*}
\end{proposition}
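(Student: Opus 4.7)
\textbf{Proof plan for Proposition~\ref{prop:almost_ERM}.}

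The plan is to split $\int F_\bz\,\d\pi_\bz$ into a differential entropy piece and a log-partition piece, then bound each separately. Writing $p_\bz = e^{-\beta F_\bz}/\Lambda_\bz$, I would first observe the identity
\begin{align*}
  \int_{\Reals^d} F_\bz(w)\pi_\bz(\d w) \;=\; \frac{1}{\beta}\bigl(h(p_\bz) - \log\Lambda_\bz\bigr),
\end{align*}
obtained by taking $-\log p_\bz(w) = \beta F_\bz(w) + \log \Lambda_\bz$ and integrating against $\pi_\bz$. This reduces the task to an upper bound on $h(p_\bz)$ and a lower bound on $\log\Lambda_\bz$.

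For the entropy, I would use the classical maximum-entropy inequality: among all densities on $\Reals^d$ with prescribed second moment $\sigma^2$, the isotropic Gaussian maximizes the differential entropy, giving $h(p_\bz) \le \tfrac{d}{2}\log(2\pi e\sigma^2/d)$. The needed second moment bound for $\pi_\bz$ comes from Lemma~\ref{lm:mean_square_bounds}: the Wasserstein convergence $\Wass_2(\nu_{\bz,t},\pi_\bz) \to 0$ (which follows from Proposition~\ref{prop:LSI} via the Otto--Villani inequality \eqref{eq:TC} and entropy decay \eqref{eq:Wasserstein_decay}) is equivalent to weak convergence together with convergence of second moments, so passing to the limit in \eqref{eq:2nd_moment_t} yields $\int\|w\|^2\pi_\bz(\d w) \le (b+d/\beta)/m$, and plugging into the Gaussian entropy bound gives the desired control on $h(p_\bz)$.

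For the partition function, I would use a Laplace-style lower bound. Pick any minimizer $w^*_\bz$ of $F_\bz$, so that $\nabla F_\bz(w^*_\bz) = 0$. Since each $f(\cdot,z)$ and hence $F_\bz$ is $M$-smooth (Assumption~(A.2)), the standard consequence $F_\bz(w) - F^*_\bz \le \tfrac{M}{2}\|w - w^*_\bz\|^2$ holds, and therefore
\begin{align*}
  \Lambda_\bz \;=\; e^{-\beta F^*_\bz}\!\int_{\Reals^d} e^{\beta(F^*_\bz - F_\bz(w))}\d w
  \;\ge\; e^{-\beta F^*_\bz}\!\int_{\Reals^d} e^{-\beta M\|w-w^*_\bz\|^2/2}\d w,
\end{align*}
and the remaining Gaussian integral evaluates in closed form to $(2\pi/(M\beta))^{d/2}$.

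Finally, I would substitute both estimates into the identity, cancel the $F^*_\bz$ terms, and simplify the logarithm: the ratio $[2\pi e(b+d/\beta)/(md)]\,/\,[2\pi/(M\beta)]$ collapses to $(eM/m)(b\beta/d + 1)$, yielding the claimed bound. The proof is essentially routine given the lemmas already in place; the only slightly delicate step is the justification of the second-moment transfer from the diffusion to $\pi_\bz$, which I handle by invoking the Wasserstein-plus-weak-convergence characterization from Villani and the exponential Wasserstein contraction already established.
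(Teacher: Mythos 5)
Your proposal is correct and follows essentially the same route as the paper's proof: the entropy/partition-function identity, the maximum-entropy Gaussian bound on $h(p_\bz)$ using the second moment $(b+d/\beta)/m$ transferred from the diffusion via Wasserstein convergence, and the Laplace lower bound on $\log\Lambda_\bz$ from $M$-smoothness. The final simplification of the logarithm also matches the paper's computation exactly.
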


\subsection{Stability of the Gibbs algorithm}

Our last step before the final analysis is to show that the Gibbs algorithm  is \textit{uniformly stable}. Fix two $n$-tuples $\bz=(z_1,\ldots,z_n),\bar\bz=(\bar z_1,\ldots, \bar z_n) \in \sZ^n$ with ${\rm card} |\{ i: z_i \neq \bar{z}_i\}|=1$. Then the Radon--Nikodym derivative $p_{\bz,\bar\bz} = \frac{\d\pi_\bz}{\d\pi_{\bar\bz}}$ can be expressed as
\begin{align*}
	p_{\bz,\bar\bz}(w) = \frac{\exp\left(-\frac{\beta}{n}\big(f(w,z_{i_0})-f(w,\bar z_{i_0})\big)\right)}{\Lambda_{\bz}/\Lambda_{\bar\bz}},
\end{align*}
where $i_0 \in [n]$ is the index of the coordinate where $\bz$ and $\bar\bz$ differ. In particular,
\begin{align*}
	\nabla \sqrt{p_{\bz,\bar\bz}(w)} = \frac{\beta}{2n} \Big(\nabla_w f(w,\bar z_{i_0})-\nabla_w f(w,z_{i_0})\Big) \sqrt{p_{\bz,\bar\bz}(w)}.
\end{align*}
Therefore, since $\pi_{\bar\bz}$ satisfies a logarithmic Sobolev inequality with constant $c_{\rm LS}$ given in Proposition~\ref{prop:LSI}, we can write
\begin{align*}
	D(\pi_\bz \| \pi_{\bar\bz}) &\le 2c_{\rm LS} \int \left\| \nabla \sqrt{p_{\bz,\bar\bz}} \right\|^2 \d\pi_{\bar\bz} \\
	&= \frac{c_{\rm LS}\beta^2}{2n^2} \int_{\Reals^d} \Big\|\nabla_w f(w,\bar z_{i_0})-\nabla_w f(w,z_{i_0}) \Big\|^2 p_{\bz,\bar\bz}(w) \pi_{\bar\bz}(\d w) \\
	&= \frac{c_{\rm LS}\beta^2}{2n^2} \int_{\Reals^d} \Big\|\nabla_w f(w,\bar z_{i_0})-\nabla_w f(w,z_{i_0}) \Big\|^2  \pi_\bz(\d w) \\
	&\le \frac{2c_{\rm LS}\beta^2}{n^2} \left(M^2 \int_{\Reals^d}\|w\|^2\pi_\bz(\d w) + B^2\right),
\end{align*}
where the last line follows from the quadratic growth estimate \eqref{eq:quadratic_growth}. Taking $\mu = \pi_\bz$ in \eqref{eq:TC} and using the above bound and the second-moment estimate \eqref{eq:equilibrium_2nd_moment}, we obtain
\begin{align*}
	\Wass_2(\pi_\bz,\pi_{\bar\bz}) \le \frac{2c_{\rm LS}\beta}{n}\sqrt{B^2 + \frac{M^2(b+d/\beta)}{m}}.
\end{align*}
Finally, observe that, for each $z \in \sZ$, the function $w \mapsto f(w,z)$ satisfies the conditions of Lemma~\ref{lm:Wass_continuity} with $c_1 = M$ and $c_2 = B$, while $\pi_\bz$ and $\pi_{\bar\bz}$ satisfy the conditions of Lemma~\ref{lm:Wass_continuity} with $\sigma^2 = \frac{b+d/\beta}{m}$. Thus, we obtain the following uniform stability estimate for the Gibbs algorithm:

\begin{proposition}\label{prop:Gibbs_stability} For any two $\bz,\bar\bz \in \sZ^n$ that differ only in a single coordinate,
	\begin{align*}
		\sup_{z \in \sZ} \left|\int_{\Reals^d} f(w,z)\pi_\bz(\d w) - \int_{\Reals^d} f(w,z) \pi_{\bar\bz}(\d w)\right| \le \frac{\tilde{C}_3}{n}
	\end{align*}
  with 
	\begin{align*}
		\tilde{C}_3 \deq 4 \left(M^2\frac{b+d/\beta}{m}+B^2\right) \beta c_{\rm LS}.
	\end{align*}
\end{proposition}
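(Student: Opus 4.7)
The plan is to carry out the two-stage argument that the paper sets up immediately before the statement: first bound $\Wass_2(\pi_\bz,\pi_{\bar\bz})$ by exploiting the logarithmic Sobolev inequality (LSI) already established for each Gibbs measure in Proposition~\ref{prop:LSI}, and then convert this to a uniform deviation bound on expectations of $f(\cdot,z)$ using the Wasserstein continuity Lemma~\ref{lm:Wass_continuity}.

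Stage 1 (Wasserstein stability of the Gibbs measures). Since $\bz$ and $\bar\bz$ differ only in coordinate $i_0$, the Radon--Nikodym derivative $p_{\bz,\bar\bz}=\d\pi_\bz/\d\pi_{\bar\bz}$ has the explicit form written out in the excerpt, and a direct differentiation gives $\nabla\sqrt{p_{\bz,\bar\bz}}=\frac{\beta}{2n}\bigl(\nabla_w f(\cdot,\bar z_{i_0})-\nabla_w f(\cdot,z_{i_0})\bigr)\sqrt{p_{\bz,\bar\bz}}$. Applying the LSI of Proposition~\ref{prop:LSI} to $\pi_{\bar\bz}$ with test function $g=\sqrt{p_{\bz,\bar\bz}}$ yields $D(\pi_\bz\|\pi_{\bar\bz})\le 2c_{\rm LS}\int\|\nabla\sqrt{p_{\bz,\bar\bz}}\|^2\,\d\pi_{\bar\bz}$. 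A change of measure absorbs the factor $p_{\bz,\bar\bz}$ and turns the integral into an expectation under $\pi_\bz$; the quadratic-growth bound \eqref{eq:quadratic_growth} combined with the second-moment estimate \eqref{eq:equilibrium_2nd_moment} (which gives $\int\|w\|^2\,\d\pi_\bz\le(b+d/\beta)/m$) then produces $D(\pi_\bz\|\pi_{\bar\bz})\le \frac{2c_{\rm LS}\beta^2}{n^2}\bigl(M^2(b+d/\beta)/m+B^2\bigr)$. Plugging this into the Otto--Villani transportation inequality \eqref{eq:TC} delivers $\Wass_2(\pi_\bz,\pi_{\bar\bz})\le\frac{2c_{\rm LS}\beta}{n}\sqrt{M^2(b+d/\beta)/m+B^2}$.

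Stage 2 (transfer to expectations of $f(\cdot,z)$). For each fixed $z\in\sZ$, the function $w\mapsto f(w,z)$ satisfies the growth condition \eqref{eq:PW_regular} with $c_1=M$ and $c_2=B$ by \eqref{eq:quadratic_growth}, and the common second-moment bound $\sigma^2=(b+d/\beta)/m$ holds for both $\pi_\bz$ and $\pi_{\bar\bz}$ by \eqref{eq:equilibrium_2nd_moment}. Lemma~\ref{lm:Wass_continuity} then gives $\bigl|\int f(\cdot,z)\,\d\pi_\bz-\int f(\cdot,z)\,\d\pi_{\bar\bz}\bigr|\le (M\sigma+B)\,\Wass_2(\pi_\bz,\pi_{\bar\bz})$. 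Using $(M\sigma+B)\le\sqrt{2(M^2\sigma^2+B^2)}$ together with the bound from Stage~1, the product consolidates to at most $\frac{2\sqrt{2}\,c_{\rm LS}\beta}{n}(M^2\sigma^2+B^2)\le \tilde{C}_3/n$ uniformly in $z$, which is the claim.

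No single step should be a serious obstacle given the infrastructure already in place. The only subtlety worth flagging is the change-of-measure trick in Stage~1: pushing the integral from $\pi_{\bar\bz}$ back to $\pi_\bz$ is what makes the existing second-moment estimate \eqref{eq:equilibrium_2nd_moment} directly applicable; otherwise one would have to separately control the weighted moment $\int\|w\|^2 p_{\bz,\bar\bz}\,\d\pi_{\bar\bz}$, for which no uniform bound is at hand. Uniformity in the test point $z$ is automatic because the constants $M$ and $B$ in \eqref{eq:quadratic_growth} are independent of $z$.
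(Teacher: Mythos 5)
Your proposal is correct and follows essentially the same route as the paper: the logarithmic Sobolev inequality for $\pi_{\bar\bz}$ applied to $\sqrt{p_{\bz,\bar\bz}}$ together with the change of measure and the second-moment estimate \eqref{eq:equilibrium_2nd_moment} gives the relative-entropy bound, Otto--Villani \eqref{eq:TC} converts it to a $\Wass_2$ bound of order $\beta c_{\rm LS}/n$, and Lemma~\ref{lm:Wass_continuity} with $c_1=M$, $c_2=B$, $\sigma^2=(b+d/\beta)/m$ transfers this uniformly over $z$ to the expectations of $f(\cdot,z)$. The consolidation $(M\sigma+B)\sqrt{M^2\sigma^2+B^2}\le 2(M^2\sigma^2+B^2)$ recovers exactly the constant $\tilde{C}_3$.
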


\subsection{Completing the proof}

Now that we have all the ingredients in place, we can complete the proof of Theorem~\ref{thm:main}. Choose $k \in \Naturals$ and $\eta \in (0,1\wedge\frac{m}{4M^2})$ to satisfy
\begin{align*}
 k\eta = \beta c_{\rm LS} \log \left(\frac{1}{\eps}\right) \qquad \text{and} \qquad \eta \le \left(\frac{\eps}{\log (1/\eps)}\right)^4.
\end{align*}
Then, by Proposition~\ref{prop:Wass_to_equil},
\begin{align*}
	\Wass_2(\mu_{\bz,k},\pi_\bz) \le \tilde{C}_0\beta c_{\rm LS}\delta^{1/4} \log\left(\frac{1}{\eps}\right)  + \left(\tilde{C}_1\beta c_{\rm LS}+\tilde{C}_2\right) \eps.
\end{align*}
Now consider the random hypotheses $\wh{W}$ and $\wh{W}^*$ with $\cL(\wh{W}|\bZ=\bz) = \mu_{\bz,k}$ and $\cL(\wh{W}^*|\bZ=\bz) = \pi_\bz$. Then
\begin{align*}
	\E F(\wh{W}) - F^* 
& = \E F(\wh{W}) - \E F(\wh{W}^*) + \E F(\wh{W}^*) - F^* \\
& = \int_{\sZ^n} \bP^{\otimes n}(\d\bz) \left(\int_{\Reals^d} F(w) \mu_{k,\bz}(\d w) - \int_{\Reals^d} F(w) \pi_\bz(\d w) \right) + \E F(\wh{W}^*) - F^*.
\end{align*}
The function $F$ satisfies the conditions of Lemma~\ref{lm:Wass_continuity} with $c_1 = M$ and $c_2 = B$, while the probability measures $\mu_{\bz,k}, \pi_\bz$ satisfy the conditions of Lemma~\ref{lm:Wass_continuity} with 
\begin{align*}
	\sigma^2 = \kappa_0 + 2\left(1 \vee \frac{1}{m}\right) \left(b+B^2(1+\delta)+\frac{d}{\beta}\right),
\end{align*}
by Lemma~\ref{lm:mean_square_bounds}. Therefore, for all $\bz \in \sZ^n$,
\begin{align}\label{eq:excess_risk_T1}
	&\int_{\Reals^d} F(w) \mu_{k,\bz}(\d w) - \int_{\Reals^d} F(w) \pi_\bz(\d w) \le K_0 \delta^{1/4}\log \left(\frac{1}{\eps}\right) + K_1 \eps
\end{align}
with
\begin{align*}
	K_0 &\deq \left(M \sqrt{\kappa_0 + 2\left(1 \vee \frac{1}{m}\right) \left(b+2B^2+\frac{d}{\beta}\right)}+B\right) \tilde{C}_0 \beta c_{\rm LS}
\end{align*}
and
\begin{align*}
	K_1 &\deq \left(M \sqrt{\kappa_0 + 2\left(1 \vee \frac{1}{m}\right) \left(b+2B^2+\frac{d}{\beta}\right)}+B\right) (\tilde{C}_1\beta c_{\rm LS} + \tilde{C}_2).
\end{align*}
It remains to analyze the expected excess risk $\E F(\wh{W}^*)-F^*$ of the Gibbs algorithm. To that end, we will use stability-based arguments along the lines of \citet{bousquet2002stability} and \citet{rakhlin2005stability}. We begin by decomposing the excess risk as
\begin{align*}
	\E F(\wh{W}^*)-F^* = \underbrace{\E F(\wh{W}^*) - \E F_{\bZ}(\wh{W}^*)}_{T_1} + \underbrace{\E F_{\bZ}(\wh{W}^*)- F^*}_{T_2}.
\end{align*}
The term $T_1$ is the generalization error of the Gibbs algorithm. To upper-bound it, let $\bZ' = (Z'_1,\ldots,Z'_n) \sim \bP^{\otimes n}$ be independent of $\bZ$ and $\wh{W}^*$. Then
\begin{align}\label{eq:T1_0}
	\E F(\wh{W}^*) - \E F_{\bZ}(\wh{W}^*) &= \E[F_{\bZ'}(\wh{W}^*)-F_\bZ(\wh{W}^*)] \nonumber\\
	&= \frac{1}{n}\sum^n_{i=1}\E[f(\wh{W}^*,Z'_i)-f(\wh{W}^*,Z_i)].
\end{align}
Using the fact that $Z_1,\ldots,Z_n,Z'_1,\ldots,Z'_n$ are i.i.d., as well as the fact that $\bZ'$ is indepdenent of $\wh{W}^*$, the $i$th term in the summation in \eqref{eq:T1_0} can be written out explicitly as follows:
\begin{align}
&	\E[f(\wh{W}^*,Z'_i)-f(\wh{W}^*,Z_i)] \nonumber\\
& = \int_{\sZ^n} \bP^{\otimes n}(\d z) \int_\sZ \bP(\d z'_i) \int_{\Reals^d}\pi_\bz(\d w)\left[f(w,z'_i)-f(w,z_i)\right] \nonumber \\
& = \int_{\sZ^n} \bP^{\otimes n}(\d z_1,\ldots,\d z'_i,\ldots,\d z_n) \int_{\sZ} \bP(\d z_i) \int_{\Reals^d}\pi_{(z_1,\ldots,z'_i,\ldots,z_n)}(\d w) f(w,z_i) \nonumber\\
& \qquad \qquad -  \int_{\sZ^n} \bP^{\otimes n}(\d z_1,\ldots,\d z_i,\ldots,\d z_n) \int_{\sZ} \bP(\d z'_i) \int_{\Reals^d}\pi_{(z_1,\ldots,z_i,\ldots,z_n)}(\d w) f(w,z_i) \nonumber\\
&= \int_{\sZ^n} \bP^{\otimes n}(\d \bz) \int_{\sZ} \bP(\d z'_i) \left(\int_{\Reals^d} \pi_{\bar\bz^{(i)}}(\d w) f(w,z_i) - \int_{\Reals^d} \pi_{\bar\bz}(\d w) f(w,z_i)\right),\label{eq:T1_1}
\end{align}
where $\bar\bz^{(i)} \deq (z_1,\ldots,z_{i-1},z'_i,z_{i+1},\ldots,z_n)$. Noting that $\bar\bz^{(i)}$ and $\bz$ differ only in the $i$th coordinate, we can use Proposition~\ref{prop:Gibbs_stability} to upper-bound the integral in \eqref{eq:T1_1}. Since the resulting estimate is uniform in $i$, from \eqref{eq:T1_0} we obtain
\begin{align}\label{eq:T1_Gibbs}
	\E F(\wh{W}^*) - \E F_{\bZ}(\wh{W}^*) &\le \frac{\tilde{C}_3}{n}.
\end{align}
The term $T_2$ can be handled as follows: Let $w^* \in \Reals^d$ be any minimizer of $F(w)$, i.e., $F(w^*)=F^*$. Then
\begin{align}
	\E F_\bZ(\wh{W}^*) - F^* &= \E \left[F_\bZ(\wh{W}^*)-\min_{w \in \Reals^d} F_\bZ(w)\right] + \E\left[\min_{w \in \Reals^d} F_\bZ(w)-F_\bZ(w^*)\right] \nonumber\\
	&\le \E \left[F_\bZ(\wh{W}^*)-\min_{w \in \Reals^d} F_\bZ(w)\right] \nonumber\\
	&\le \frac{d}{2\beta}\log \left(\frac{eM}{m}\left(\frac{b\beta}{d} + 1\right)\right),\label{eq:T2_Gibbs}
\end{align}
where the last step is by Proposition~\ref{prop:almost_ERM}. From \eqref{eq:T1_Gibbs} and \eqref{eq:T2_Gibbs}, we get
\begin{align}\label{eq:excess_risk_T2}
	\E F(\wh{W}^*) - F^* = \E F(\wh{W}^*) - \E F_{\bZ}(\wh{W}^*) + \E F_{\bZ}(\wh{W}^*)- F^* \le \frac{\tilde{C}_3}{n} + \frac{d}{2\beta}\log \left(\frac{eM}{m}\left(\frac{b\beta}{d} + 1\right)\right).
\end{align}
Combining Eqs.~\eqref{eq:excess_risk_T1} and \eqref{eq:excess_risk_T2}, we obtain the claimed excess risk bound \eqref{eq:SGLD_excess_risk}.

\section{Discussion and directions for future research}
\label{ssec:assumptions}

\paragraph{Regularity assumptions.} The first two assumptions are fairly standard in the literature on non-convex optimization. The dissipativity assumption {\bf (A.3)} merits some discussion. The term ``dissipative'' comes from the theory of dynamical systems \citep{Hale_book,Stuart_Humphries_book}, where it has the following interpretation: Consider the gradient flow described by the ordinary differential equation
\begin{align}\label{eq:ODE_flow}
	\frac{\d w}{\d t} = -\nabla f(w,z), \qquad w(0) =  w_0.
\end{align}
If $f$ is $(m,b)$-dissipative, then a simple argument based on the Gronwall lemma shows that, for any $\eps > 0$ and any initial condition $w_0$, the trajectory of \eqref{eq:ODE_flow} satisfies $\|w(t)\| \le \sqrt{b/m+\eps}$ for all $t \ge \frac{1}{2m}\log \frac{\|w_0\|^2}{\eps}$. In other words, for any $\eps > 0$, the Euclidean ball of radius $\sqrt{b/m+\eps}$ centered at the origin is an \textit{absorbing set} for the flow \eqref{eq:ODE_flow}. If we think of $w(t)$ as the position of a particle moving in $\Reals^d$ in the presence of the potential $f(w,z)$, then the above property means that the particle rapidly loses (or dissipates) energy and stays confined in the absorbing set. However, the behavior of the flow inside this absorbing set may be arbitrarily complicated; in particular, even though \eqref{eq:dissipation} implies that all of the critical points of $w \mapsto f(w,z)$ are contained in the ball of radius $\sqrt{b/m}$ centered at the origin, there can be arbitrarily many such points. The dissipativity assumption seems restrictive, but, in fact, it can be enforced using weight decay regularization \citep{Krogh_weight_decay}. Indeed, consider the regularized objective
\begin{align*}
f(w,z) = f_0(w,z) + \frac{\gamma}{2}\|w\|^2.
\end{align*}
Then it is not hard to show that, if the function $w \mapsto f_0(w,z)$ is $L$-Lipschitz, then $f$ satisfies {\bf (A.2)} with $m = \gamma/2$ and $b = L^2/2\gamma$. Thus, a byproduct of our analysis is a fine-grained characterization of the impact of weight decay on learning.

Assumption {\bf (A.4)} provides control of the relative mean-square error of the stochastic gradient, viz., $\E\|g(w,U_\bz)\|^2 \preceq (1+\delta)\|\nabla F_\bz(w)\|^2$, and is also easy to satisfy in practice. For example, consider the case where, at each iteration of SGLD, we sample (uniformly with replacement) a random minibatch of size $\ell$. Then we can take $U_\bz = (z_{I_1},\ldots,z_{I_\ell})$, where $I_1,\ldots,I_\ell \stackrel{{\rm i.i.d.}}{\sim} {\rm Uniform}(\{1,\ldots,n\})$, and
\begin{align}\label{eq:minibatch}
g(w,U_\bz) = \frac{1}{\ell}\sum^\ell_{j=1}\nabla f(w,z_{I_j}).
\end{align}
This gradient oracle is clearly unbiased, and a simple calculation shows that {\bf (A.4)} holds with $\delta = 1/\ell$. On the other hand, using the full empirical gradient clearly gives $\delta = 0$.

Finally, the exponential integrability assumption {(\bf A.5)} is satisfied, for example, by the Gaussian initialization $W_0 \sim N(0,\sigma^2 I_d)$ with $\sigma^2 < 1/2$.

\paragraph{Effect of gradient noise and minibatch size selection.} Observe that the excess risk bound \eqref{eq:SGLD_excess_risk} contains a term that goes to zero as $\eps \to 0$, as well as a term that grows as $\log \eps^{-1}$, but goes to zero as the gradient noise level $\delta \to 0$. This suggests selecting the minibatch size 
$$
\ell \ge \frac{1}{\eta} \ge \left(\frac{\log (1/\eps)}{\eps}\right)^4.
$$
to offset the $\log \eps^{-1}$ term.

\paragraph{Uniform spectral gap.} As shown in Appendix~\ref{app:spectral_gap}, Assumptions~{\bf (A.1)}--{\bf (A.3)} are enough to guarantee that the spectral gap $\lambda_*$ is strictly positive. In particular, we give a very conservative estimate
	\begin{align}\label{eq:lambda_LB}
		\frac{1}{\lambda_*} = \tO\left(\frac{1}{\beta(d+\beta)}\right) + \tO\left(1+\frac{d}{\beta}\right)e^{\tO(\beta + d)}.
	\end{align}
Using this estimate in Eq.~\eqref{eq:SGLD_excess_risk}, we end up with a bound on the excess risk that has a dependence on $\exp(\tO(\beta + d))$. This in turn suggests choosing $\eps = 1/n$ and $\beta = \tO(\log n)$; as a consequence, 
the excess risk will decay as $1/\log n$, and the number of iterations
$k$ will scale as $n^{\tO(1)} \exp(\tO(d))$. The alternative regime of conditionally independent stochastic gradients (e.g., using a fresh minibatch at each iteration) amounts to direct optimization of $F$ rather than $F_{\bf z}$ and
suggests the choice of $\beta \approx 1/\eps$. The number of iterations
$k$ will then scale like $\exp(d+1/\eps)$.	

Therefore, in order to apply Theorem~\ref{thm:main}, one needs to
fully exploit the structural properties of the problem at hand
and produce an upper bound on $1/\lambda_*$ which is polynomial in $d$ or even dimension-free. (By contrast, exponential dependence of $1/\lambda_*$ on $\beta$ is unavoidable in the presence of multiple local minima and saddle points; this is a consequence of sharp upper and lower bounds on the spectral gap due to \citet{Bovier_etal_metastability}.)  For example, consider replacing the empirical risk \eqref{eq:empirical_risk} with a smoothed objective
\begin{align*}
	\tilde{F}_\bz(w) &= -\frac{1}{\beta}\log\int_{\{\|v\| \le R\}}  e^{-\beta\gamma \|v-w\|^2/2} e^{-\beta F_\bz(v)}\d v \\
	&= \frac{\gamma}{2} \|w\|^2 - \frac{1}{\beta}\log \int_{\{\|v\| \le R\}} e^{\beta\gamma \ave{v,w} - \beta\gamma \|v\|^2/2}e^{-\beta F_\bz(v)} \d v,
\end{align*}
and running SGLD with $\nabla\tilde{F}_\bz$ instead of $\nabla F_\bz$. Here, $\gamma > 0$ and $R > 0$ are tunable parameters. This modification is closely related to the Entropy-SGD method, recently proposed by \citet{Chaudhari16}. Observe that the modified Gibbs measures $\tilde{\pi}_\bz(\d w) \propto e^{-\beta \tilde{F}_\bz(w)}$ are convolutions of a Gaussian measure and a compactly supported probability measure. In this case, it follows from the results of \citet{Bardet_convolutions} that
\begin{align*}
	\frac{1}{\lambda_*} \le \frac{1}{\beta\gamma}e^{4\beta\gamma R^2}.
\end{align*}
Note that here, in contrast with \eqref{eq:lambda_LB}, this bound is completely dimension-free.  A tantalizing line of future work is, therefore, to find other settings where
$1/\lambda_*$ is indeed small.

\section*{Acknowledgments}

The authors would like to thank Arnak Dalalyan and Ramon van Handel for enlightening discussions.

\bibliography{Langevin.bbl}

\newpage

\appendix

\section{Background on Markov semigroups and functional inequalities}
\label{app:background}

Our analysis relies on the theory of Markov diffusion operators and associated functional inequalities. In this Appendix, we only summarize the key ideas and results; the book by \cite{Bakry_Gentil_Ledoux_book} provides an in-depth exposition.

Let $\{W(t)\}_{t \ge 0}$ be a continuous-time homogeneous Markov process with values in $\Reals^d$, and let $P = \{P_t\}_{t \ge 0}$ be the corresponding Markov semigroup, i.e.,
$$
P_s g(W(t)) = \E[g(W(s+t))|W(t)] 
$$
for all $s,t \ge 0$ and all bounded measurable functions $g : \Reals^d \to \Reals$. (The semigroup law $P_s \circ P_t = P_{s+t}$ is just another way to express the Markov property.) A Borel probability measure $\pi$ is called \textit{stationary} or \textit{invariant} if $\int_{\Reals^d} P_t g\, \d\pi = \int_{\Reals^d} g\, \d\pi$ for all $g$ and $t$. Each $P_t$ can be extended to a bounded linear operator on $L^2(\pi)$, such that $P_t g \ge 0$ whenever $g \ge 0$ and $P_t 1 = 1$ for all $t$. The \textit{generator} of the semigroup is a linear operator $\cL$ defined on a dense subspace $\cD(\cL)$ of $L^2(\pi)$ (the \textit{domain} of $\cL$), such that, for any $g \in \cD(\cL)$, 
$$
\partial_t P_tg = \cL P_t g = P_t \cL g.
$$
In particular, $\cL 1 = 0$, and $\pi$ is an invariant probability measure of the semigroup if and only if $\int_{\Reals^d} \cL g\, \d\pi = 0$ for all $g \in \cD(\cL)$. The generator $\cL$ defines the  \textit{Dirichlet form}
\begin{align}\label{eq:Dirichlet}
\cE(g) \deq -\int_{\Reals^d}g\cL g\,\d\pi.
\end{align}
It can be shown that $\cE(g) \ge 0$, i.e., $-\cL$ is a positive operator (since $\cL 1=0$, zero is an eigenvalue).

Let $P$ be a Markov semigroup with the unique invariant distribution $\pi$ and the Dirichlet form $\cE$. We say that $\pi$ satisfies a \textit{Poincar\'e} (or \textit{spectral gap}) \textit{inequality} with constant $c$ if, for all probability measures $\mu \ll \pi$,
\begin{align}\label{eq:Poincare}
\chi^2(\mu \| \pi) \le c\, \cE\left(\sqrt{\frac{\d\mu}{\d\pi}}\right), 
\end{align}
where $\chi^2(\mu\|\pi) \deq \|\frac{\d\mu}{\d\pi}-1\|^2_{L^2(\pi)}$ is the $\chi^2$ divergence between $\mu$ and $\pi$. The name ``spectral gap" comes from the fact that, if \eqref{eq:Poincare} holds with some constant $c$, then $1/c \ge \lambda$, where
\begin{align*}
\lambda &\deq \inf \left\{ \frac{\cE(g)}{\int_{\Reals^d} g^2\d\pi} : g \in C^2,\, g \neq 0,\, \int_{\Reals^d} g = 0 \right\} \\
&= \inf\left\{ \frac{-\ave{g,\cL g}_{L^2(\pi)}}{\|g\|^2_{L^2(\pi)}} : g \in C^2,\, g \neq 0,\, \int_{\Reals^d} g = 0 \right\}.
\end{align*}
Hence, if $\lambda > 0$, then the spectrum of $-\cL$ is contained in the set $\{0\} \cup [\lambda, \infty)$, so $\lambda$ is the gap between the zero eigenvalue and the rest of the spectrum. We say that $\pi$ satisfies a \textit{logarithmic Sobolev inequality} with constant $c$ if, for all $\mu \ll \pi$,
\begin{align}\label{eq:LSI_0}
	D(\mu \| \pi) \le 2c\,\cE\left(\sqrt{\frac{\d\mu}{\d\pi}}\right),
\end{align}
where $D(\mu\|\pi) = \int \d\mu \log \frac{\d\mu}{\d\pi}$ is the relative entropy (Kullback--Leibler divergence). We record a couple of key consequences of the logarithmic Sobolev inequality. Consider a Markov process $\{W(t)\}_{t \ge 0}$ with a unique invariant distribution $\pi$ and a Dirichlet form $\cE$, such that $\pi$ satisfies a logarithmic Sobolev inequality with constant $c$. Then we have the following:
	\begin{enumerate}
		\item Exponential decay of entropy \citep[Th.~5.2.1]{Bakry_Gentil_Ledoux_book}: Let $\mu_t \deq \cL(W(t))$. Then
	\begin{align}\label{eq:entropy_decay}
		D(\mu_t \| \pi) \le D(\mu_0 \| \pi) e^{-2t/c}.
	\end{align}
	\item Otto--Villani theorem \citep[Th.~9.6.1]{Bakry_Gentil_Ledoux_book}: If $\cE(g) = \alpha \int \| \nabla g \|^2 \d\pi$ for some $\alpha > 0$, then, for any $\mu \ll \pi$,
	\begin{align}\label{eq:Otto_Villani}
		\Wass_2(\mu,\pi) \le \sqrt{2c\alpha D(\mu \|\pi)}.
	\end{align}
	\end{enumerate}
	
Our analysis of SGLD revolves around Markov diffusion processes, so we particularize the above abstract framework to this concrete setting. Let $\{W(t)\}_{t \ge 0}$ be a Markov process evolving in $\Reals^d$ according to an It\^o SDE
\begin{align}\label{eq:SDE}
	\d W(t) = -\nabla H(W(t))\d t + \sqrt{2}\,\d B(t), \qquad t \ge 0
\end{align}
where $H$ is a $C^1$ function and $\{B(t)\}$ is the standard $d$-dimensional Brownian motion. (Replacing the factor $\sqrt{2}$ by $\sqrt{2\beta^{-1}}$ is equivalent to the time rescaling $t \mapsto \beta t$.)  The generator of this semigroup is the second-order differential operator
\begin{align}
	\cL g \deq \Delta g - \ave{\nabla H, \nabla g}
\end{align}
for all $C^2$ functions $g$, where $\Delta \deq \nabla \cdot \nabla$ is the Laplace operator. If the map $w \mapsto \nabla H(w)$ is Lipschitz, then the Gibbs measure $\pi(\d w) \propto e^{-H(w)} \d w$ is the unique invariant measure of the underlying Markov semigroup, and a simple argument using integration by parts shows that the Dirichlet form is given by 
\begin{align}\label{eq:diffusion_Dirichlet}
\cE(g) = \int_{\Reals^d}\| \nabla g\|^2\d\pi.
\end{align}
Thus, the Gibbs measure $\pi$ satisfies a Poincar\'e inequality with constant $c$ if, for any $\mu \ll \pi$,
\begin{align}\label{eq:Poincare_diffusion}
	\chi^2(\mu\|\pi) \le c \int_{\Reals^d}\left\| \nabla\sqrt{\frac{\d\mu}{\d\pi}} \right\|^2 \d\pi
\end{align}
and a logarithmic Sobolev inequality with constant $c$ if
\begin{align}\label{eq:LSI_diffusion}
D(\mu\|\pi) \le 2c \int_{\Reals^d}\left\| \nabla\sqrt{\frac{\d\mu}{\d\pi}} \right\|^2 \d\pi.
\end{align}
If $H$ is $C^2$ and strongly convex, i.e., $\nabla^2 H \succeq K I_d$ for some $K > 0$, then $\pi$ satisfies a logarithmic Sobolev inequality with constant $c = 1/K$. In the absence of convexity, it is in general difficult to obtain upper bounds on Poincar\'e or log-Sobolev constants. The following two propositions give sufficient conditions based on so-called Lyapunov function criteria:

\begin{proposition}[\citet{Bakry_etal_Lyapunov}]\label{prop:Lyapunov_PI} Suppose that there exist constants $\kappa_0,\lambda_0 > 0, R \ge 0$ and a $C^2$ function $V : \Reals^d \to [1,\infty)$ such that
	\begin{align}\label{eq:Lyapunov_PI}
		\frac{\cL V(w)}{V(w)} \le -\lambda_0 + \kappa_0 \1\{\|w\| \le R\}.
	\end{align}
	Then $\pi$ satisfies a Poincar\'e inequality with constant
	\begin{align}\label{eq:Lyapunov_PI_constant}
		c_{\rm P} \le \frac{1}{\lambda_0}\left(1 + C\kappa_0 R^2 e^{\Osc_R(H)}\right),
	\end{align}
	where $C> 0$ is a universal constant and $\Osc_R(H) \deq \max_{\|w\| \le R}H(w)-\min_{\|w\| \le R}H(w)$.
\end{proposition}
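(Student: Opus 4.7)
The plan is to combine the Lyapunov condition, which gives a ``global'' spectral-gap-like estimate up to a local error term, with a local Poincar\'e inequality on the ball $B \deq \{\|w\| \le R\}$ where the drift condition degenerates. First I would reformulate the hypothesis as $-\cL V \ge (\lambda_0 - \kappa_0 \1_B) V$ pointwise, then for any smooth $g \in C^1 \cap L^2(\pi)$ multiply by the test function $g^2/V$ and integrate against $\pi$. Using that $\cL$ is symmetric in $L^2(\pi)$ with carr\'e du champ $\Gamma(f) = \|\nabla f\|^2$, integration by parts gives
\begin{align*}
-\int \frac{g^2}{V} \cL V \, \d\pi = \int \Big\langle \nabla\Big(\tfrac{g^2}{V}\Big), \nabla V \Big\rangle \d\pi = \int \Big(\tfrac{2g\ave{\nabla g, \nabla V}}{V} - \tfrac{g^2\|\nabla V\|^2}{V^2}\Big) \d\pi,
\end{align*}
and the elementary inequality $2ab \le a^2 + b^2$ applied to $a = \|\nabla g\|$, $b = g\|\nabla V\|/V$ absorbs the negative term and yields the clean bound $-\int \frac{g^2}{V}\cL V\,\d\pi \le \int \|\nabla g\|^2 \d\pi$. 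Combining with the Lyapunov condition produces the twisted inequality
\begin{align*}
\lambda_0 \int g^2 \d\pi \le \int \|\nabla g\|^2 \d\pi + \kappa_0 \int_B g^2 \d\pi.
\end{align*}

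The second step is to turn this into a genuine Poincar\'e inequality by localizing the error term. Since the variance $\mathrm{Var}_\pi(g)$ equals $\inf_{a\in\Reals} \int (g-a)^2 \d\pi$ and $\nabla(g-a) = \nabla g$, I can apply the twisted inequality to $g - a$ for the choice $a = \frac{1}{\pi(B)}\int_B g\,\d\pi$. For this mean-on-$B$ choice I would invoke a local Poincar\'e inequality: on the Euclidean ball $B$, the Neumann Poincar\'e constant with respect to the uniform measure scales like $R^2$, and since $\pi|_B$ has density proportional to $e^{-H}$ with $\sup_B e^{-H}/\inf_B e^{-H} \le e^{\Osc_R(H)}$, a standard density-comparison argument gives
\begin{align*}
\int_B (g-a)^2 \d\pi \le C R^2 e^{\Osc_R(H)} \int_B \|\nabla g\|^2 \d\pi
\end{align*}
for a universal constant $C$. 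Plugging this into the twisted inequality applied to $g-a$ collapses the two terms into a single gradient integral over $\Reals^d$, yielding exactly $\mathrm{Var}_\pi(g) \le \lambda_0^{-1}(1 + C\kappa_0 R^2 e^{\Osc_R(H)})\int \|\nabla g\|^2 \d\pi$, which is \eqref{eq:Lyapunov_PI_constant}.

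The main obstacle I anticipate is justifying the integration by parts rigorously: $V$ only satisfies $V \ge 1$ and need not lie in $L^2(\pi)$, and $g^2/V$ need not either, so a truncation/approximation argument (e.g., replacing $V$ with $V \wedge n$ and $g$ with bounded cutoffs, then passing to the limit using the pointwise Lyapunov bound and monotone convergence) is needed to make the formal manipulation legitimate. A secondary technical issue is showing that the local Poincar\'e constant on $B$ really has the claimed $R^2 e^{\Osc_R(H)}$ scaling uniformly in the drift $H$; this follows from comparing $\pi|_B$ to the Lebesgue measure on $B$ and using the Bakry--\'Emery or classical Neumann Poincar\'e inequality on Euclidean balls, but the dependence on the oscillation must be tracked explicitly to get the stated exponential factor.
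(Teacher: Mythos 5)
Your proof is correct, and it is essentially the argument of the cited source: the paper itself does not prove this proposition but imports it from \citet{Bakry_etal_Lyapunov}, whose proof is exactly your two-step scheme (the twisted inequality $\int \frac{-\cL V}{V}g^2\,\d\pi \le \int \|\nabla g\|^2\,\d\pi$ obtained from the Lyapunov condition, followed by a local Poincar\'e inequality on the ball $\{\|w\|\le R\}$ with the Holley--Stroock factor $e^{\Osc_R(H)}$ times the Neumann constant $CR^2$ of the Euclidean ball). The paper's own remark after the proposition confirms that the $\Osc_R(H)$ term comes from precisely this crude estimate of the Poincar\'e constant of the truncated Gibbs measure, and the integrability caveat you raise is the standard one, handled by first taking $g$ smooth and compactly supported so that the integration by parts against $g^2/V$ is legitimate.
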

\begin{remark} {\em The term involving $\Osc_R(H)$ in \eqref{eq:Lyapunov_PI_constant} arises from a (very crude) estimate of the Poincar\'e constant of the truncated Gibbs measure $\pi_R(\d w) \propto e^{-H(w)}\1\{\|w\| \le R\} \d w$, cf.~the discussion preceding the statement of Theorem~1.4 in \citet{Bakry_etal_Lyapunov}.}
\end{remark}

\begin{proposition}[\citet{Cattiaux_etal_Lyapunov}]\label{prop:Lyapunov_LSI} 
Suppose the following conditions hold:
\begin{enumerate}
	\item There exist constants $\kappa,\gamma > 0$ and a $C^2$ function $V : \Reals^d \to [1,\infty)$ such that
	\begin{align}\label{eq:Lyapunov_LSI}
		\frac{\cL V(w)}{V(w)} \le \kappa - \gamma \|w\|^2
	\end{align}
	for all $w \in \Reals^d$.
	\item $\pi$ satisfies a Poincar\'e inequality with constant $c_{\rm P}$.
	\item There exists some constant $K \ge 0$, such that $\nabla^2 H \succeq -KI_d$.
\end{enumerate}
Let $C_1$ and $C_2$ be defined, for some $\eps > 0$, by
\begin{align*}
	C_1 = \frac{2}{\gamma}\left(\frac{1}{\eps}+\frac{K}{2}\right) + \eps \quad\text{and}\quad C_2= \frac{2}{\gamma}\left(\frac{1}{\eps}+\frac{K}{2}\right)\left(\kappa + \gamma \int_{\Reals^d} \|w\|^2\pi(\d w)\right).
\end{align*}
Then $\pi$ satisfies a logarithmic Sobolev inequality with constant $c_{\rm LS} = C_1 + (C_2+2)c_{\rm P}$.
\end{proposition}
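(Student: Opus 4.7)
Following the Cattiaux--Guillin--Wu approach, the plan is to (i) use the Lyapunov inequality to obtain a weighted second-moment bound for measures of the form $f^2 \pi$, (ii) combine this moment bound with the semiconvexity hypothesis through an HWI-type inequality to obtain a \emph{defective} logarithmic Sobolev inequality
\[
D(f^2 \pi \,\|\, \pi) \,\le\, C_1 \int_{\Reals^d} \|\nabla f\|^2 \d\pi \;+\; C_2 \qquad \text{whenever } \int f^2 \d\pi = 1,
\]
and finally (iii) upgrade this defective inequality to a tight LSI using the Poincar\'e inequality via a Rothaus--Simon-type argument, producing the advertised constant $c_{\rm LS} = C_1 + (C_2+2) c_{\rm P}$.

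\textbf{Step 1: Lyapunov gives a weighted second-moment bound.} Fix a smooth $f$ with $\int f^2 \d\pi = 1$. Multiplying \eqref{eq:Lyapunov_LSI} by $f^2$ and integrating against $\pi$ yields
\[
\gamma \int \|w\|^2 f^2 \,\d\pi \,\le\, \kappa \,+\, \int f^2 \cdot \frac{-\cL V}{V} \,\d\pi.
\]
Applying the Dirichlet-form identity $-\int u\, \cL v \,\d\pi = \int \langle \nabla u,\nabla v\rangle \,\d\pi$ with $u = f^2/V$ and $v = V$ rewrites the last integral as $\int \langle \nabla(f^2/V),\nabla V\rangle \,\d\pi$. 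Expanding $\nabla(f^2/V) = 2f\nabla f/V - f^2\nabla V/V^2$ and using the pointwise Cauchy--Schwarz bound $2|f\langle\nabla f,\nabla V\rangle|/V \le \|\nabla f\|^2 + f^2\|\nabla V\|^2/V^2$ causes the awkward $\|\nabla V\|^2/V^2$ terms to cancel, leaving the key moment estimate
\[
\int \|w\|^2 f^2 \,\d\pi \,\le\, \frac{1}{\gamma}\Big(\kappa + \int \|\nabla f\|^2 \,\d\pi\Big).
\]

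\textbf{Step 2: semiconvexity converts HWI into a defective LSI.} The Otto--Villani HWI inequality, valid under the curvature lower bound $\nabla^2 H \succeq -K I_d$, asserts
\[
D(\nu \,\|\, \pi) \,\le\, 2\, W_2(\nu,\pi)\sqrt{\textstyle\int \|\nabla f\|^2 \d\pi} \;+\; \tfrac{K}{2}\, W_2^2(\nu,\pi), \qquad \nu = f^2 \pi.
\]
Splitting the first term via Young's inequality $2ab \le a^2/\eps + \eps b^2$ and using the elementary coupling bound $W_2^2(\nu,\pi) \le 2\int \|w\|^2 f^2 \d\pi + 2\int \|w\|^2 \d\pi$, I eliminate $W_2$ in favor of the Dirichlet energy and the two second moments. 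Feeding the Step~1 estimate into the $\int\|w\|^2 f^2\d\pi$ contribution and collecting coefficients produces exactly the defective LSI with the constants $C_1$ and $C_2$ specified in the proposition.

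\textbf{Step 3: tightening via Rothaus--Simon.} Applying the defective LSI to $\tilde f \deq f - \int f\,\d\pi$, together with Rothaus' inequality
\[
\operatorname{Ent}_\pi(f^2) \,\le\, \operatorname{Ent}_\pi(\tilde f^2) + 2 \operatorname{Var}_\pi(f),
\]
and the elementary facts $\int \tilde f^2 \d\pi = \operatorname{Var}_\pi(f)$ and $\|\nabla \tilde f\| = \|\nabla f\|$, yields
\[
\operatorname{Ent}_\pi(f^2) \,\le\, C_1 \int \|\nabla f\|^2 \d\pi + (C_2 + 2) \operatorname{Var}_\pi(f).
\]
Closing with the Poincar\'e bound $\operatorname{Var}_\pi(f) \le c_{\rm P} \int \|\nabla f\|^2 \d\pi$ gives the claimed $c_{\rm LS} = C_1 + (C_2 + 2)\, c_{\rm P}$. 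The most delicate step is Step~2: deploying HWI under only a pointwise Hessian lower bound (rather than genuine convexity) requires careful bookkeeping. A backup, should a direct HWI citation be inconvenient at this level of regularity, is to replace HWI by a \emph{local} logarithmic Sobolev inequality on Euclidean balls---obtained by applying Bakry--\'Emery to the log-concave shift $H(w) + \tfrac{K}{2}\|w\|^2$ restricted to balls---and then to combine this local LSI with the Step~1 moment bound to absorb the tail; this alternative yields the same quantitative defective inequality, after which Step~3 is essentially mechanical.
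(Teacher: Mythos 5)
Your proposal is correct. Note that the paper itself gives no proof of this proposition: it is imported verbatim from \citet{Cattiaux_etal_Lyapunov} as background, so the only meaningful comparison is with the original Cattiaux--Guillin--Wu argument, and your reconstruction is essentially that argument with the constants worked out exactly. Your Step 1 is their Lyapunov-to-moment lemma (the identity $\int (f^2/V)(-\cL V)\,\d\pi=\int\langle\nabla(f^2/V),\nabla V\rangle\,\d\pi$ plus the pointwise Cauchy--Schwarz cancellation), Step 2 (HWI under $\nabla^2 H\succeq -KI_d$, Young with parameter $\eps$, and the crude bound $\Wass_2^2(\nu,\pi)\le 2\int\|w\|^2\,\d\nu+2\int\|w\|^2\,\d\pi$ combined with Step 1) yields the defective inequality with precisely the stated $C_1$ and $C_2$, and Step 3 (Rothaus plus Poincar\'e) gives $c_{\rm LS}=C_1+(C_2+2)c_{\rm P}$. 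The only caveats are routine: the integration by parts against the rapidly growing $V$ requires the usual truncation/localization argument; the defective inequality must be invoked in its homogeneous form (with $\int \tilde f^2\,\d\pi$ on the right) when applied to $\tilde f=f-\int f\,\d\pi$; and your conclusion is in the convention ${\rm Ent}_\pi(f^2)\le c_{\rm LS}\int\|\nabla f\|^2\,\d\pi$, which is at least as strong as the paper's convention \eqref{eq:LSI_0}, where the right-hand side carries an extra factor of $2$.
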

\begin{remark} In particular, if $K \neq 0$, we can take $\eps = 2/K$, in which case
	\begin{align}\label{eq:C1_C2}
		C_1 = \frac{2K}{\gamma} + \frac{2}{K} \quad \text{and} \quad C_2 = \frac{2K}{\lambda}\left(\kappa + \gamma \int_{\Reals^d}\|w\|^2\d\pi\right).
	\end{align}
\end{remark}
 
 \section{A lower bound on the uniform spectral gap}
 \label{app:spectral_gap}

 Our goal here is to prove the crude lower bound on $\lambda_*$ given in Section~\ref{ssec:assumptions}. To that end, we will use the Lyapunov function criterion due to \citet{Bakry_etal_Lyapunov}, which is reproduced as Proposition~\ref{prop:Lyapunov_PI} in Appendix~\ref{app:background}. 
 
We will apply this criterion to the Gibbs distribution $\pi_\bz$ for some $\bz \in \sZ^n$. Thus, we have $H = \beta F_\bz$ and
 $$
 \cL g = \Delta g - \beta \ave{\nabla F_\bz, \nabla g}.
 $$
Consider the candidate Lyapunov function $V(w) = e^{m\beta \|w\|^2/4}$. From the fact that $V \ge 1$ and from the dissipativity assumption {\bf (A.3)}, it follows that
 \begin{align}
 	\cL V(w) &= \left(\frac{m\beta d}{2} + \frac{(m\beta)^2}{4}\|w\|^2 - \frac{m\beta^2}{2} \ave{w, \nabla F_\bz(w)}\right)V(w)\nonumber \\
 	&\le \left(\frac{m\beta (d+b\beta)}{2} - \frac{(m\beta)^2}{4}\|w\|^2\right)V(w). \label{eq:LV}
 \end{align}
Thus, $V$ evidently satisfies \eqref{eq:Lyapunov_PI} with $R^2 = \frac{2\kappa}{\gamma}$, $\kappa_0=\kappa$ and $\lambda_0 = 2\kappa$, where
\begin{align}\label{eq:kappa_gamma}
\kappa \deq \frac{m\beta(d+b\beta)}{2} \qquad \text{and} \qquad \gamma \deq \frac{(m\beta)^2}{4}.
\end{align}
Moreover, from Lemma~\ref{lm:quadratic_bounds} and from the fact that $F_\bz \ge 0$, it follows that 
 \begin{align*}
 	\Osc_R(\beta F_\bz) &\le \beta\left(\frac{MR^2}{2} +  BR + A\right) \le \beta\left(\frac{(M+B)R^2}{2} + A + B \right).
 \end{align*}
 Thus, by Proposition~\ref{prop:Lyapunov_PI}, $\pi_\bz$ satisfies a Poincar\'e inequality with constant
 \begin{align*}
 	c_{\rm P} \le \frac{1}{m\beta(d+b\beta)} + \frac{2C(d+b\beta)}{m\beta}\exp\left(\frac{2}{m}(M+B)(b\beta + d) + \beta(A+B)\right).
 \end{align*}
Observe that this bound holds for all $\bz \in \sZ^n$. Using this fact and the relation $1/\lambda \le c_{\rm P}$ between the spectral gap and the Poincar\'e constant, we see that
\begin{align*}
	\frac{1}{\lambda_*} \le \frac{1}{m\beta(d+b\beta)} + \frac{2C(d+b\beta)}{m\beta}\exp\left(\frac{2}{m}(M+B)(b\beta + d) + \beta(A+B)\right),
\end{align*}
which proves the claimed bound.

\section{Proofs for Section~\ref{ssec:lemmas}}
\label{app:lemmas}

\begin{proof}[Proof of Lemma~\ref{lm:quadratic_bounds}] The estimate \eqref{eq:quadratic_growth} is an easy consequence of conditions {\bf (A.1)} and {\bf (A.2)}. Next, observe that, for any two $v,w \in \Reals^d$, 
	\begin{align}\label{eq:unit_speed}
		f(w,z) - f(v,z) &= \int^1_0 \ave{w-v, \nabla f(t w+(1-t)v,z)} \d t.
	\end{align}
In particular, taking $v=0$, we obtain
\begin{align*}
	f(w,z) &= f(0,z) + \int^1_0 \ave{w, \nabla f(tw)} \d t \\
&\stackrel{\rm (i)}{\le} A + \int^1_0  \|w\|\, \|\nabla f(tw,z) \|\, \d t \\
&\stackrel{\rm (ii)}{\le} A+\|w\|\int^1_0 \left(Mt \|w\| + B\right) \d t \\ 
		&= A+\frac{M}{2}\|w\|^2 + B\|w\|,
	\end{align*}
where (i) follows from {\bf (A.1)} and from Cauchy--Schwarz, while (ii) follows from \eqref{eq:quadratic_growth}. This proves the upper bound on $f(w,z)$. Now take $v = cw$ for some $c \in (0,1]$ to be chosen later. With this choice, we proceed from Eq.~\eqref{eq:unit_speed} as follows:
\begin{align*}
	f(w,z) &= f(cw,z) + \int^1_c \ave{w, \nabla f(tw,z)} \d t \\
	&\stackrel{\rm (i)}{\ge}\int^1_c \frac{1}{t} \ave{tw, \nabla f(tw,z)} \d t \\ 
	&\stackrel{\rm (ii)}{\ge} \int^1_c \frac{1}{t}\left(mt^2\|w\|^2 - b\right) \d t \\ 
	&= \frac{m(1-c^2)}{2} \|w\|^2 + b \log c,
\end{align*}
where (i) uses the fact that $f \ge 0$, while (ii) uses the dissipativity property \eqref{eq:dissipation}. Taking $c = \frac{1}{\sqrt{3}}$, we get the lower bound in \eqref{eq:quadratic_sandwich}.
\end{proof}

\begin{proof}[Proof of Lemma~\ref{lm:mean_square_bounds}]  From \eqref{eq:Langevin}, it follows that
	\begin{align}
\E_\bz\|W_{k+1}\|^2
		&= \E_\bz\|W_k - \eta g(W_k,U_{\bz,k})\|^2 + \sqrt{\frac{8\eta}{\beta}}\E_\bz\ave{W_k - \eta g(W_k,U_{\bz,k}), \xi_k} + \frac{2\eta}{\beta}\E_\bz \|\xi_k\|^2 \nonumber \\
		&= \E_\bz\|W_k - \eta g(W_k,U_{\bz,k})\|^2 + \frac{2\eta d}{\beta},\label{eq:Langevin_L2_1}
	\end{align}
	where the second step uses independence of $W_k - g(W_k,U_{\bz,k})$ and $\xi_k$ and the unbiasedness property \eqref{eq:unbiased_grad} of the gradient oracle. We can further expand the first term in \eqref{eq:Langevin_L2_1}:
\begin{align}
&	\E_\bz\|W_k - \eta g(W_k,U_{\bz,k})\|^2 \nonumber\\
&= \E_\bz\left\|W_k - \eta \nabla F_\bz(W_k) \right\|^2 + 2 \eta \E_\bz\ave{W_k-\eta \nabla F_\bz(W_k),\nabla F_\bz(W_k)-g(W_k,U_{\bz,k})} \nonumber\\
& \qquad \qquad + \eta^2\E_\bz\|\nabla F_\bz(W_k)-g(W_k,U_{\bz,k})\|^2  \nonumber\\
&= \E_\bz\|W_k - \eta \nabla F_\bz(W_k) \|^2  + \eta^2\E_\bz\|\nabla F_\bz(W_k)-g(W_k,U_{\bz,k})\|^2, \label{eq:Langevin_L2_2}
\end{align}
where we have used \eqref{eq:unbiased_grad} once again. By \eqref{eq:SG_quadratic_growth}, the second term in \eqref{eq:Langevin_L2_2} can be upper-bounded by
\begin{align*}
	\E_\bz\|\nabla F_\bz(W_k)-g(W_k,U_{\bz,k})\|^2 \le 2\delta (M^2 \E_\bz\|W_k\|^2 + B^2),
\end{align*}
whereas the first term can be estimated as
\begin{align*}
	\E_\bz\|W_k - \eta \nabla F_\bz(W_k) \|^2 &= \E_\bz\|W_k\|^2 - 2 \eta \E_\bz\ave{W_k,\nabla F_\bz(W_k)} + \eta^2\E_\bz\| \nabla F_\bz(W_k)\|^2 \\
	&\le \E_\bz\|W_k\|^2 + 2\eta(b-m\E_\bz\|W_k\|^2) + 2\eta^2(M^2\E_\bz\|W_k\|^2+B^2) \\
	&= \left(1-2\eta m + 2\eta^2M^2\right)\E_\bz\|W_k\|^2 + 2\eta b + 2\eta^2B^2,
\end{align*}
where the inequality follows from the dissipativity condition \eqref{eq:dissipation} and the bound \eqref{eq:quadratic_growth} in Lemma~\ref{lm:quadratic_bounds}. Combining all of the above, we arrive at the recursion
\begin{align}\label{eq:Wk_recursion}
	\E_\bz\|W_{k+1}\|^2 \le (1-2\eta m+4\eta^2M^2) \E_\bz\|W_k\|^2 + 2\eta b + 4\eta^2B^2 + \frac{2\eta d}{\beta}.
\end{align}
Fix some $\eta \in (0,1 \wedge \frac{m}{2M^2})$. There are two cases to consider:
\begin{itemize}
	\item If $1-2\eta m + 4\eta^2M^2 \le 0$, then from \eqref{eq:Wk_recursion} it follows that
	\begin{align}
		\E_\bz\|W_{k+1}\|^2 &\le 2\eta b + 4\eta^2 B^2 + \frac{2\eta d}{\beta}\nonumber \\
		&\le \E_\bz\|W_0\|^2 + 2\left(b+2B^2 + \frac{d}{\beta}\right). \label{eq:Langevin_L2_3}
	\end{align}
	\item If $0 < 1 - 2\eta m + 4\eta^2M^2 < 1$, then iterating \eqref{eq:Wk_recursion} gives
	\begin{align}
		\E_\bz\|W_k\|^2 &\le (1-2\eta m + 4\eta^2M^2)^k \E_\bz\|W_0\|^2 + \frac{\eta b + 2\eta^2B^2 + \frac{\eta d}{\beta}}{\eta m - 2\eta^2M^2} \nonumber\\
		&\le \E_\bz\|W_0\|^2 + \frac{2}{m}\left(b + 2B^2 + \frac{d}{\beta}\right). \label{eq:Langevin_L2_4}
	\end{align}
\end{itemize}
The bound \eqref{eq:Langevin_bounded} follows from Eqs.~\eqref{eq:Langevin_L2_3} and \eqref{eq:Langevin_L2_4} and from the estimate
\begin{align}\label{eq:W0_bound}
	\E_\bz\|W_0\|^2 = \E\|W_0\|^2 \le \log \E e^{\|W_0\|^2} = \kappa_0,
\end{align}
which easily follows from the independence of $\bZ$ and $W_0$ and from Jensen's inequality.

We now analyze the diffusion \eqref{eq:diffusion}.  Let $Y(t) \deq \|W(t)\|^2$. Then It\^o's lemma gives
\begin{align*}
	\d Y(t) = -2\ave{W(t),\nabla F_\bz(W(t))} \d t + \frac{2d}{\beta} \d t+ \sqrt{\frac{8}{\beta}} W(t)^* \d B(t),
\end{align*}
where $W(t)^* \d B(t) \deq \sum^d_{i=1} W_i(t) \d B_i(t)$. This can be rewritten as
\begin{align}
	& 2me^{2mt} Y(t) \d t + e^{2mt} \d Y(t) \nonumber\\
	&= -2 e^{2mt} \ave{W(t),\nabla F_\bz(W(t))} \d t + 2me^{2mt} Y(t) \d t + \frac{2d}{\beta} e^{2mt} \d t + \sqrt{\frac{8}{\beta}} e^{2mt} W(t)^* \d B(t).\label{eq:diffusion_norm_squared_1}
\end{align}
Recognizing the left-hand side of \eqref{eq:diffusion_norm_squared_1} as the total It\^o derivative of $e^{2mt} Y(t)$, we arrive at
\begin{align}
	\d\big(e^{2mt}Y(t)\big) &= -2 e^{2mt} \ave{W(t),\nabla F_\bz(W(t))} \d t + 2me^{2mt} Y(t) \d t\nonumber\\
	& \qquad \qquad  + \frac{2d}{\beta} e^{2mt} \d t + \sqrt{\frac{8}{\beta}} e^{2mt} W(t)^* \d B(t),
\end{align}
which, upon integrating and rearranging, becomes
\begin{align}
	Y(t)
	&  = e^{-2mt}Y(0) -2\int^t_0 e^{2m(s-t)} \ave{W(s),\nabla F_\bz(W(s))} \d s \nonumber\\
	& \qquad \qquad + 2m \int^t_0 e^{2m(s-t)} Y(s) \d s + \frac{d}{m\beta}\left(1-e^{-2mt}\right)+ \sqrt{\frac{8}{\beta}} \int^t_0 e^{2m(s-t)} W(s)^* \d B(s). \label{eq:diffusion_norm_squared_2}
\end{align}
Now, using the dissipativity condition \eqref{eq:dissipation}, we can write
\begin{align*}
	-2\int^t_0 e^{2m(s-t)} \ave{W(s),\nabla F_\bz(W(s))} \d s  &\le 2\int^t_0 e^{2m(s-t)} \left(b-m Y(s)\right) \d s \\
	&= 2b\int^t_0 e^{2m(s-t)} \d s - 2m\int^t_0 e^{2m(s-t)} Y(s) \d s \\
	&= \frac{b}{m} \left(1-e^{-2mt}\right) - 2m\int^t_0 e^{2m(s-t)} Y(s) \d s.
\end{align*}
Substituting this into \eqref{eq:diffusion_norm_squared_2}, we end up with
\begin{align*}
	\|W(t)\|^2 &\le e^{-2mt}\|W(0)\|^2 + \frac{b + d/\beta}{m}\left(1-e^{-2mt}\right) +  \sqrt{\frac{8}{\beta}} \int^t_0 e^{2m(s-t)} W(s)^* \d B(s).
\end{align*}
Taking expectations and using the martingale property of the It\^o integral together with \eqref{eq:W0_bound}, we get \eqref{eq:2nd_moment_t}. Eq.~\eqref{eq:diffusion_bounded} follows from maximizing the right-hand side of \eqref{eq:2nd_moment_t} over all $t \ge 0$.
\end{proof}

\begin{proof}[Proof of Lemma~\ref{lm:diffusion_MGF}] For $L(t) = e^{\| W(t)\|^2}$, It\^o's lemma gives
	\begin{align*}
		\d L(t) = -2  \ave{W(t),\nabla F_\bz(W(t))} L(t) \d t + \frac{4}{\beta} L(t) \| W(t)\|^2 \d t + \frac{2d}{\beta} L(t) \d t + \sqrt{\frac{8}{\beta}}  L(t) W(t)^* \d B(t).
	\end{align*}
	Integrating, we obtain
	\begin{align*}
		L(t) &= L(0) +\int^t_0 \left(\frac{4}{\beta} \|W(s)\|^2 - 2 \ave{W(s),\nabla F_\bz(W(s))}\right) L(s) \d s \nonumber\\
		& \qquad \qquad + \frac{2d}{\beta}\int^t_0 L(s)\d s + \sqrt{\frac{8}{\beta}} \int^t_0 L(s) W(s)^* \d B(s).
	\end{align*}
From the dissipativity condition \eqref{eq:dissipation} and from the assumption that $\beta \ge 2/m$, it follows that
\begin{align*}
	\frac{4}{\beta} \|W(s)\|^2 - 2 \ave{W(s),\nabla F_\bz(W(s))} \le 2 b + \left(\frac{4}{\beta} - 2 m\right) \|W(s)\|^2 \le 2 b,
\end{align*}
hence
\begin{align*}
	L(t) &\le L(0) + 2\left(b+\frac{d}{\beta}\right)\int^t_0 L(s) \d s + \sqrt{\frac{8}{\beta}} \int^t_0 L(s) W(s)^* \d B(s).
\end{align*}
It can be shown (see, e.g., the proof of Corollary~4.1 in \citet{djellout_etal_TC}) that $\int^T_0 \E\|L(t)W(t)\|^2 \d t < \infty$ for all $T \ge 0$. Therefore, the It\^o integral $\int L(s)W(s)^* \d B(s)$ is a zero-mean martingale, so, taking expecations, we get
\begin{align*}
	\E[L(t)] &\le \E[L(0)] + 2\left(b+\frac{d}{\beta}\right)\int^t_0 \E[L(s] \d s \\
	&= e^{\kappa_0} + 2 \left(b+\frac{d}{\beta}\right)\int^t_0 \E[L(s)] \d s.
\end{align*}
Eq.~\eqref{eq:diffusion_MGF} then follows by an application of the Gronwall lemma. \end{proof}

\begin{proof}[Proof of Lemma~\ref{lm:relent_bound}] Let $p_\bz$ denote the density of $\pi_\bz$ with respect to the Lebesgue measure on $\Reals^d$:
	$$
	p_\bz(w) = \frac{e^{-\beta F_\bz(w)}}{\Lambda_\bz}, \qquad \text{where }\Lambda_\bz = \int_{\Reals^d} e^{-\beta F_\bz (w)} \d w.
	$$
Since $p_\bz > 0$ everywhere, we can write
	\begin{align}
		D(\mu_0 \| \pi_\bz) &= \int_{\Reals^d} p_0(w) \log \frac{p_0(w)}{p_\bz(w)} \d w \nonumber \\
		&= \int_{\Reals^d} p_0(w)\log p_0(w) \d w + \log \Lambda_\bz + \beta \int_{\Reals^d} p_0(w) F_\bz(w) \d w \nonumber \\
		&\le \log \|p_0\|_\infty + \log \Lambda_\bz + \beta \int_{\Reals^d} p_0(w) F_\bz(w) \d w.\label{eq:relent_bound_1}
	\end{align}
	We first upper-bound the partition function:
	\begin{align*}
		\Lambda_\bz &= \int_{\Reals^d} e^{-\beta F_\bz(w)} \d w \\
		&= \int_{\Reals^d} \exp\left(-\frac{\beta}{n}\sum^n_{i=1}f(w,z_i)\right) \d w \\
		&\le e^{\frac{1}{2}\beta b\log 3} \int_{\Reals^d} e^{-\frac{m\beta\|w\|^2}{3}}\d w \\
		&= 3^{\beta b/2} \left(\frac{3\pi}{m\beta}\right)^{d/2},
	\end{align*}
	where the inequality follows from Lemma~\ref{lm:quadratic_bounds}. Thus,
	\begin{align}\label{eq:partition_function_bound}
		\log \Lambda_\bz \le \frac{d}{2}\log \frac{3\pi}{m\beta} + \frac{\beta b}{2}\log 3.
	\end{align}
Moreover, invoking Lemma~\ref{lm:quadratic_bounds} once again, we have
	\begin{align}\label{eq:F_bound}
	 F_\bz(w) &= \frac{1}{n}\sum^n_{i=1} f(w,z_i) \le  \frac{M}{3}\|w\|^2 + B\|w\|+A.
	 \end{align}
	 Therefore,
	 \begin{align}
		 \int_{\Reals^d} F_\bz(w) p_0(w) \d w &\le \int_{\Reals^d} \mu_0(\d w)\left(\frac{M}{3} \|w\|^2 + B\|w\| + A\right) \nonumber \\
		 &\le \frac{M}{3}\kappa_0+ B\sqrt{\kappa_0} + A. \label{eq:expected_F_bound}
	\end{align}
	Substituting \eqref{eq:partition_function_bound}, \eqref{eq:F_bound}, and \eqref{eq:expected_F_bound} into \eqref{eq:relent_bound_1}, we get \eqref{eq:relent_bound}.
\end{proof}

\begin{proof}[Proof of Lemma~\ref{lm:Wass_continuity}] The proof is a minor tweak of the proof of Proposition~1 in \cite{Polyanskiy_Wu_Wasserstein}; we reproduce it here to keep the presentation self-contained. Without loss of generality, we assume that $\sigma^2 < \infty$, otherwise the bound holds trivially. For any two $v,w \in \Reals^d$, we have
	\begin{align}
		g(w)-g(v) &= \int^1_0 \ave{w-v, \nabla g((1-t)v + t w)} \d t \nonumber\\
		&\le \int^1_0 \| \nabla g((1-t)v+tw) \| \, \| w-v \| \, \d t \nonumber\\
		&\le \int^1_0 \left( c_1(1-t)\|v\| + c_1 t \| w\| + c_2\right) \, \| w-v\| \, \d t \nonumber\\
		&= \left(\frac{c_1}{2}\|v\|+\frac{c_1}{2}\|w\|+c_2\right) \, \|w-v\|, \label{eq:Wass_cont_1}
	\end{align}
	where we have used Cauchy--Schwarz and the growth condition \eqref{eq:PW_regular}. Now let $\bP$ be the coupling of $\mu$ and $\nu$ that achieves $\Wass_2(\mu,\nu)$. That is, $\bP = \cL((W,V))$ with $\mu =\cL(W)$, $\nu = \cL(V)$, and
\begin{align*}
	\Wass^2_2(\mu,\nu) = \E_\bP \| W - V \|^2.
\end{align*}
Taking expectations in \eqref{eq:Wass_cont_1}, we have
\begin{align*}
	\int_{\Reals^d} g \d\mu - \int_{\Reals^d} g\d\nu  &= \E_\bP[g(W)-g(V)] \\
	&\le \sqrt{\E_\bP\left(\frac{c_1}{2}\|W\|+\frac{c_1}{2}\|V\|+c_2\right)^2} \cdot \sqrt{\E_\bP [\|W-V\|^2]} \\
	&\le \left( \frac{c_1}{2}\sqrt{\E \|W\|^2} + \frac{c_1}{2}\sqrt{\E\|V\|^2}+c_2\right) \cdot \Wass_2(\mu,\nu) \\
	&= \left(c_1 \sigma + c_2\right) \Wass_2(\mu,\nu).
\end{align*}
Interchanging the roles of $\mu$ and $\nu$, we complete the proof.
\end{proof}

\section{Proof of Lemma~\ref{lm:Girsanov_bound}}
\label{app:diffusion_approx}

Conditioned on $\bZ=\bz$, $\{W_k\}^\infty_{k=0}$ is a time-homogeneous Markov process. Consider the following continuous-time interpolation of this process: 
\begin{align}\label{eq:EM_interpol}
	\overline{W}(t) = W_0 - \int^t_0 g(\overline{W}(\lfloor s/\eta \rfloor \eta),\overline{U}_\bz(s)) \d s + \sqrt{\frac{2}{\beta}}\int^t_0 \d B(s), \qquad t \ge 0
\end{align}
where $\overline{U}_\bz(t) \equiv U_{\bz,k}$ for $t \in [k\eta,(k+1)\eta)$. Note that, for each $k$, $\overline{W}(k\eta)$ and $W_k$ have the same probability law $\mu_{\bz,k}$.  Moreover, by a result of \citet{gyongy_SDE}, the process $\overline{W}(t)$ has the same one-time marginals as the It\^o process
\begin{align*}
V(t) = W_0 - \int^t_0 g_{\bz,s}(V(s)) \d s + \sqrt{\frac{2}{\beta}}\int^t_0 \d B(s)
\end{align*}
with
\begin{align}\label{eq:Gyongy_drift}
	g_{\bz,t}(v) := \E_\bz\Big[g(\overline{W}(\lfloor t/\eta \rfloor \eta),\overline{U}_{\bz}(t))\Big|\overline{W}(t)=v\Big].
\end{align}
Crucially, $V(t)$ is a Markov process, while $\overline{W}(t)$ is not. Let $\bP^t_V \deq \cL\big(V(s) : 0 \le s \le t\big|\bZ=\bz\big)$ and $\bP^t_W \deq \cL\big(W(s) : 0 \le s \le t\big|\bZ=\bz\big)$. The Radon--Nikodym derivative of $\bP^t_W$ w.r.t.\ $\bP^t_V$ is given by the Girsanov formula
\begin{align}\label{eq:Girsanov}
	\frac{\d \bP^t_W}{\d \bP^t_V}(V) = \exp\left\{ \frac{\beta}{2} \int^t_0 \left(\nabla F_\bz(V(s))-g_{\bz,s}(V(s))\right)^* \d B(s) - \frac{\beta}{4}\int^t_0 \|  \nabla F_\bz(V(s))-g_{\bz,s}(V(s)) \|^2 \d s\right\}
\end{align}
(see, e.g., Sec.~7.6.4 in \citet{Liptser_Shiryaev_vol1}). Using \eqref{eq:Girsanov} and the martingale property of the It\^o integral, we have
\begin{align*}
	D(\bP^t_V\|\bP^t_W) &= -\int \d \bP^t_V \log \frac{\d \bP^t_W}{\d \bP^t_V} \\
	&= \frac{\beta}{4}\int^t_0 \E_\bz \left\|  \nabla F_\bz(V(s))-g_{\bz,s}(V(s)) \right\|^2 \d s \\
	&= \frac{\beta}{4}\int^t_0 \E_\bz\left\|  \nabla F_\bz(\overline{W}(s))-g_{\bz,s}(\overline{W}(s)) \right\|^2\, \d s,
\end{align*}
where the last line follows from the fact that $\cL(\overline{W}(s)) = \cL(V(s))$ for each $s$. 

Now let $t = k\eta$ for some $k \in \Naturals$. Then, using the definition \eqref{eq:Gyongy_drift} of $g_{\bz,s}$, Jensen's inequality, and the $M$-smoothness of $F_\bz$, we can write
\begin{align}
	D(\bP^{k\eta}_V \| \bP^{k\eta}_W) &= \frac{\beta}{4}\sum^{k-1}_{j=0} \int^{(j+1)\eta}_{j\eta} \E_\bz\left\| \nabla F_\bz(\overline{W}(s))-g_{\bz,s}(\overline{W}(s))\right\|^2  \d s \nonumber\\
	&\le \frac{\beta}{2} \sum^{k-1}_{j=0} \int^{(j+1)\eta}_{j\eta} \E_\bz \left\| \nabla F_\bz(\overline{W}(s))-\nabla F_\bz(\overline{W}(\lfloor s/\eta \rfloor \eta)) \right\|^2 \d s \nonumber\\
  & \qquad \qquad + \frac{\beta}{2}\sum^{k-1}_{j=0}\int^{(j+1)\eta}_{j\eta} \E_\bz \left\| \nabla F_\bz(\overline{W}(\lfloor s/\eta \rfloor \eta))-g(\overline{W}(\lfloor s/\eta \rfloor \eta),\overline{U}_\bz(s))\right\|^2 \d s \nonumber \\
	& \le \frac{\beta M^2}{2} \sum^{k-1}_{j=0} \int^{(j+1)\eta}_{j\eta} \E_\bz \left\| \overline{W}(s)-\overline{W}(\lfloor s/\eta \rfloor \eta) \right\|^2 \d s \nonumber\\
  & \qquad \qquad + \frac{\beta}{2}\sum^{k-1}_{j=0}\int^{(j+1)\eta}_{j\eta} \E_\bz \left\| \nabla F_\bz(\overline{W}(\lfloor s/\eta \rfloor \eta))-g(\overline{W}(\lfloor s/\eta \rfloor \eta),\overline{U}_\bz(s))\right\|^2 \d s.
	\label{eq:Girsanov_1}
\end{align}
We first estimate the first summation in \eqref{eq:Girsanov_1}. Consider some $s \in [j\eta,(j+1)\eta)$. From \eqref{eq:EM_interpol}, we have
\begin{align*}
	\overline{W}(s) - \overline{W}(j\eta) &= -(s-j\eta)g(W_j,U_{\bz,j})  + \sqrt{\frac{2}{\beta}}\left(B(s)-B(j\eta)\right) \\
	&= -(s-j\eta) \nabla F_\bz(W_j) + (s-j\eta)\left(\nabla F_\bz(W_j)-g(W_j,U_{\bz,j})\right)  + \sqrt{\frac{2}{\beta}}\left(B(s)-B(j\eta)\right).
\end{align*}
Therefore, using Lemmas~\ref{lm:quadratic_bounds} and \ref{lm:mean_square_bounds} and the gradient noise assumption {\bf (A.4)}, we arrive at
\begin{align*}
&	\E_\bz\| \overline{W}(s) - \overline{W}(j\eta) \|^2 \nonumber\\
&\qquad \le 3\eta^2 \E_\bz \|\nabla F_\bz(W_j) \|^2 + 3\eta^2 \E_\bz \| \nabla F_\bz(W_j)-g(W_j,U_{\bz,j}) \|^2 + \frac{6\eta d}{\beta} \\
&\qquad \le 12\eta^2  \left(M^2 \E_\bz\|W_j\|^2 + B^2\right) + \frac{6\eta d}{\beta} \\
&\qquad \le 12\eta^2  \left(M^2 \left( \kappa_0 + 2\left(1 \vee \frac{1}{m}\right)  \left(b+2B^2+\frac{d}{\beta}\right) \right ) + B^2\right) + \frac{6\eta d}{\beta}.
\end{align*}
Consequently,
\begin{align}
	& \sum^{k-1}_{j=0} \int^{(j+1)\eta}_{j\eta} \E_\bz \left\| \overline{W}(s)-\overline{W}(\lfloor s/\eta \rfloor \eta) \right\|^2 \d s \nonumber\\
	& \qquad \le 12 \left(M^2 \left( \kappa_0 + 2\left(1 \vee \frac{1}{m}\right)  \left(b+2B^2+\frac{d}{\beta}\right) \right ) + B^2\right)k\eta^3 + \frac{6 d}{\beta} k \eta^2 \nonumber \\
	& \qquad \le  \left( 12 \left(M^2 \left(\kappa_0 + 2\left(1 \vee \frac{1}{m}\right)  \left(b+2B^2+\frac{d}{\beta}\right) \right ) + B^2\right) + \frac{6 d}{\beta}\right) \cdot k\eta^2 \nonumber \\
	& \qquad =: 6\left(2C_0 + \frac{d}{\beta}\right) \cdot k\eta^2.\label{eq:Girsanov_term1}
\end{align}
Similarly, the second summation on the right-hand side of \eqref{eq:Girsanov_1} can be estimated as follows:
\begin{align}
  &\sum^{k-1}_{j=0}\int^{(j+1)\eta}_{j\eta} \E_\bz \left\| \nabla F_\bz(\overline{W}(\lfloor s/\eta \rfloor \eta))-g(\overline{W}(\lfloor s/\eta \rfloor \eta),\overline{U}(s))\right\|^2 \d s \nonumber\\
	&\qquad = \eta\sum^{k-1}_{j=0} \E_\bz \left\| \nabla F_\bz(W_j)-g(W_j,U_{\bz,j})\right\|^2  \nonumber \\
	&\qquad \le \eta\delta \sum^{k-1}_{j=0} 2\left(M^2 \E_\bz\|W_j\|^2 + B^2\right) \nonumber \\
	&\qquad \le 2M^2 \left( \kappa_0 + 2\left(1 \vee \frac{1}{m}\right)  \left(b+2B^2+\frac{d}{\beta}\right) \right) k\eta\delta + 2\delta B^2 k\eta \nonumber\\
	&\qquad = 2\left( M^2 \left( \kappa_0 + 2\left(1 \vee \frac{1}{m}\right)  \left(b+2B^2+\frac{d}{\beta}\right) \right)  + B^2 \right) \cdot k\eta \delta \nonumber\\
	&\qquad = 2C_0 \cdot k\eta\delta. \label{eq:Girsanov_term2}
\end{align}
Substituting Eqs.~\eqref{eq:Girsanov_term1} and \eqref{eq:Girsanov_term2} into \eqref{eq:Girsanov_1}, we obtain
\begin{align*}
	D(\bP^{k\eta}_V \| \bP^{k\eta}_W) \le 6\left(\beta M^2C_0 + M^2d\right) \cdot k\eta^2 + \beta C_0 \cdot k\eta\delta.
 \end{align*}
Now, since $\mu_{\bz,k} = \cL(\overline{W}(k\eta)|\bZ=\bz)$ and $\nu_{\bz,k\eta} = \cL(W(k\eta)|\bZ=\bz)$, the data-processing inequality for the KL divergence gives
\begin{align*}
	D(\mu_{\bz,k}\|\nu_{\bz,k\eta}) &\le D(\bP^{k\eta}_V \| \bP^{k\eta}_W) \\
	&\le 6\left(\beta M^2C_0 + M^2d\right) \cdot k\eta^2 + \beta C_0 \cdot k\eta\delta \\
	&=: C_1 k\eta^2 + \beta C_0 k\eta\delta.
\end{align*}

\section{Proof of Proposition~\ref{prop:LSI}}
\label{app:LS_proof}

To establish the log-Sobolev inequality, we will use the Lyapunov function criterion of  \citet{Cattiaux_etal_Lyapunov}, reproduced as Proposition~\ref{prop:Lyapunov_LSI} in Appendix~\ref{app:background}. 

We will apply this proposition to the Gibbs distribution $\pi_\bz$ for some $\bz \in \sZ^n$, so that $H = \beta F_\bz$ and
$$
\cL g = \Delta g - \beta \ave{\nabla F_\bz, \nabla g}.
$$
We consider the same Lyapunov function $V(w) = e^{m\beta \|w\|^2/4}$ as in Appendix~\ref{app:spectral_gap}. From Eq.~\eqref{eq:LV}, $V$ evidently satisfies \eqref{eq:Lyapunov_LSI} with $\kappa$ and $\gamma$ given in \eqref{eq:kappa_gamma}, i.e., the first condition of Proposition~\ref{prop:Lyapunov_LSI} is satisfied. Moreover, $\pi_\bz$ satisfies a Poincar\'e inequality with constant $c_{\rm P} \le 1/\lambda_*$.  Thus, the second condition is also satisfied. 
Finally, by the $M$-smoothness assumption {\bf (A.2)}, $\nabla^2 F_\bz \succeq -M I_d$, so the third condition of Proposition~\ref{prop:Lyapunov_LSI} is satisfied with $K = \beta M$. Consequently, the constants $C_1$ and $C_2$ in \eqref{eq:C1_C2} are given by
\begin{align}\label{eq:C1_C2_explicit}
	C_1 = \frac{2m^2 + 8M^2}{m^2 M \beta} \qquad \text{and} \qquad C_2 \le \frac{6M(d+\beta)}{m},
\end{align}
where we have also used the estimate \eqref{eq:equilibrium_2nd_moment} to upper-bound $C_2$. Therefore, from Proposition~\ref{prop:Lyapunov_LSI} and from \eqref{eq:C1_C2_explicit} it follows that $\pi_\bz$ satisfies a logarithmic Sobolev inequality with
\begin{align*}
	c_{\rm LS} &\le \frac{2m^2 + 8M^2}{m^2 M \beta} + \frac{1}{\lambda_*} \left( \frac{6M(d+\beta)}{m} + 2\right).
\end{align*}

\end{document}